\pgfplotsset{compat=newest}
\crefname{hypothesis}{Hypothesis}{Hypotheses}
\newtheorem{assumption}{Assumption}
\Crefname{ALC@unique}{Line}{Lines}
\newcommand{\lvertiii}{{\vert\kern-0.25ex\vert\kern-0.25ex\vert}}    
\newcommand{\rvertiii}{{\vert\kern-0.25ex\vert\kern-0.25ex\vert}}
\newcommand{\mcl}{\mathcal}
\newcommand{\mbb}{\mathbb}
\newcommand{\dd}{\text{d}}
\newcommand{\Lip}{{\rm Lip}_1}
\newcommand{\veps}{\varepsilon}
\newcommand{\R}{\mbb{R}}
\newcommand{\mX}{\mcl{X}}
\newcommand{\mY}{\mcl{Y}}
\newcommand{\PP}{\mbb{P}}
\newcommand{\EE}{\mbb{E}}
\DeclareMathOperator*{\Id}{id}
\newcommand{\be}{\begin{equs}}
\newcommand{\ee}{\end{equs}}
\newcommand{\bpm}{\begin{pmatrix}}
\newcommand{\epm}{\end{pmatrix}}
\DeclareMathOperator{\E}{\mathbb E}
\newcommand{\cc}{c}
\newcommand{\K}{\mathcal{D}}
\newcommand{\D}{\mathcal{D}}
\newcommand{\W}{\mathcal{W}}
\newcommand{\N}{\mathsf{N}}
\newcommand{\Sep}{\:\: \big| \:}
\newcommand{\Law}{{\rm Law}}
\newcommand{\iidsim}{\stackrel{i.i.d.}{\sim}}
\title{Bayesian Posterior Perturbation Analysis with Integral 
Probability Metrics}
\author{Alfredo Garbuno-I{\~n}igo
\thanks{Instituto Tecnol{\'o}gico Aut{\'o}nomo de M{\'e}xico, M{\'e}xico city, M{\'e}xico, \texttt{(alfredo.garbuno@itam.mx)}} \and
 Tapio Helin 
 \thanks{Lappeenranta-Lahti University of Technology, Lappeenranta, Finland, \texttt{(tapio.helin@lut.fi)}) } \and
 Franca Hoffmann 
 \thanks{California Institute of Technology, Pasadena, CA, USA, \texttt{franca.hoffmann@caltech.edu)}} \and 
 Bamdad Hosseini 
 \thanks{University of Washington, Seattle, WA, USA, \texttt{(bamdadh@uw.edu)}}
 }
\begin{document}
\date{today}
\maketitle


\begin{abstract}
In recent years, Bayesian inference in large-scale inverse problems found in science, engineering and machine learning has gained significant attention. This paper examines the robustness of the Bayesian approach by analyzing the stability of posterior measures in relation to perturbations in the likelihood potential and the prior measure.  We present new stability results using a family of integral probability metrics (divergences) akin to dual problems that arise in optimal transport. 
Our results stand out from previous works 
in three directions: (1) We construct new families of integral probability metrics
that are adapted to the problem at hand; (2) These new metrics allow us to 
study both likelihood and prior perturbations in a convenient way; and (3) 
our analysis accommodates 
likelihood potentials that are only locally Lipschitz, making them applicable to a wide range of nonlinear inverse problems. Our theoretical findings are further reinforced through specific and novel examples where the approximation rates of posterior measures are obtained for different types of 
 perturbations and provide a path towards the convergence analysis of recently adapted 
machine learning techniques for Bayesian inverse problems such as 
data-driven priors and neural network surrogates.
\end{abstract}

\begin{keywords}
Inverse problems, Bayesian, Well-posedness, Integral Probability Metric, Optimal Transport
\end{keywords}



\section{Introduction}
Bayesian inverse/inference problems (BIPs) are ubiquitous in science and engineering applications
where uncertainty quantification (UQ) is crucial \cite{le2010spectral}. With the rise of machine learning and
data science in recent years, BIPs have found new applications in these domains specially
since  UQ enabled learning algorithms are becoming increasingly popular 
\cite{abdar2021review, arridge2019solving, gonzalez2022solving, kovachki2020conditional, laumont2022bayesian, ongie2020deep, patel2022solution}.
These new application domains for BIPs pose exciting and new theoretical questions
pertaining to the well-posedness and stability of BIPs which in turn require the
development of new tools.
The goal of this article is to address such questions focusing on the perturbation
properties of Bayesian posterior measures with
respect to prior measure, the likelihood and the data. Many learning problems concern
high-dimensional parameter spaces, such as images, that can be naturally viewed as
functional data making the framework of function space BIPs a natural choice. 

Let us consider BIPs defined on a separable Banach space $\mX$
with $\PP(\mX)$ denoting the space of Radon probability measures on $\mX$. 
We consider  {\it posterior} measures $\nu \in \PP(\mX)$ 
defined via their density with respect to an underlying {\it prior} $\mu \in \PP(\mX)$:
  \begin{equation}\label{bayes-rule}
  \frac{\dd \nu}{\dd \mu}(u) = \frac{1}{Z(y)} \exp( - \Phi(u; y)), \quad
  Z(y) :=  \mu \left(  \exp(- \Phi(u;y) \right),
\end{equation}
where we used the notation $\mu(f) := \int_\mX f(y) \dd \mu(u)$. 
We refer to  $\Phi: \mX \times \mY \mapsto \mbb{R}$ as the {\it likelihood potential},
$\mY$ is the space for the observed data $y$, 
and $Z(y)$ is the normalizing constant  that ensures $\nu$ is a 
probability measure (see \cite{stuart-acta-numerica} for a derivation of
Bayes' rule in the function space setting).

In this article we are concerned with the perturbation properties of the posterior
measure $\nu$ with respect to the likelihood function $\Phi$,
 the data $y$, and the prior $\mu$. More precisely, let $\mu_\veps \in \PP(\mX)$, $\Phi_\veps$,
and $y_\veps$ denote perturbations of the prior, likelihood, and the data, parameterized
by  $\veps \in [0, +\infty)$ giving rise to a posterior measure $\nu_\veps$
via \eqref{bayes-rule}. Then we want to show that as $\veps \to 0$ the posterior $\nu_\veps \to \nu$
in an appropriate sense. Indeed, we will  choose a statistical divergence
$\D: \PP(\mX) \times \PP(\mX) \to \R_{\ge 0}$ and establish inequalities of the form
$\D(\nu, \nu_\veps) \lesssim \psi(\veps)$, for some function $\psi$, thereby obtaining a
rate of convergence as well. 

Such perturbation results 
are closely linked with applications of BIPs for the inference of high-dimensional 
or functional parameters. For example likelihood perturbations arise naturally in 
function space inference where a discretization scheme should be employed before 
any simulations can be done or in the context of surrogate models 
such as polynomial chaos expansions \cite{marzouk2007stochastic, schwab-sparse, marzouk2009dimensionality, ma2009efficient}, Gaussian process regression \cite{stuart2018posterior, teckentrup2020convergence} or neural networks \cite{herrmann2020deep, li2020fourier} where the likelihood (or the underlying forward model) 
are replaced  with a computationally cheap approximation.
Indeed, likelihood perturbations 
 are a well-studied subject in the {\it well-posedness theory of  BIPs}
\cite{stuart-bayesian-lecture-notes, stuart-acta-numerica, hosseini-convex,
  hosseini-sparse, latz, sullivan, owhadi2015brittleness, sprungk}.
  However,
with the exception of \cite{sprungk}, the previous results overlooked
perturbations of the prior measure $\mu$. This is largely due to the
choice of $\D$, as convergence in total variation (TV) or
Hellinger metrics requires the posterior $\nu$ and the perturbation $\nu_\veps$
to be equivalent to obtain meaningful rates. This, in turn, requires the prior $\mu$ and the perturbation
$\mu_\veps$ to be equivalent as well, but this requirement
is too strong and excludes interesting cases that arise in 
modern applications such as data-driven construction of priors 
\cite{arridge2019solving, laumont2022bayesian, patel2022solution}, i.e., the 
case where the prior is learned from a training data set. For example, we might take $\mu_\veps$
as an empirical approximation to $\mu$ obtained from a set of independent samples. 
Another popular approach is to parameterize 
$\mu_\veps = T^\veps_\sharp \eta$
(the pushforward of $\eta$ through $T^\veps$)
where $T^\veps$ is a map (such as a neural network)  that is learned offline
 and $\eta$ is a generic reference measure on an abstract latent space. 
 To see why this approach can easily lead to singular priors consider 
the simple example where $\mu$ is a Gaussian  and $\mu_\veps = \mu( \cdot - \veps v)$ (the transport
map is simply $T^\veps: x \mapsto x + \veps v $)
for an element $v \in \mX$ that is  outside of the Cameron-Martin space of $\mu$ \cite[Sec.~2.4]{bogachev-gaussian} in which case $\mu$ and $\mu_\veps$ become mutually singular.

One of the novelties of \cite{sprungk} was to take $\D$ to be the Wasserstein metric $\W_p$.
These metrics are flexible enough to allow for
controling posterior perturbations in terms of prior perturbations but
one still needs to assume that the likelihood $\Phi$ is globally Lipschitz,
which excludes many nonlinear inverse problems. Here we will get around
this issue by taking $\D$ to be an integral probability metric (IPM)
\footnote{We call these integral probability 'metrics' since this is 
the established terminology in the literature although our IPMs are often 
only divergences or semi-metrics and do not necessarily satisfy all of the axioms of a metric.}:
\begin{equation}\label{IPD}
  \D(\eta_1, \eta_2) = \sup_{\psi \in \Gamma} \: \eta_1( \psi)
  - \eta_2( \psi), \qquad \forall  \eta_1,\eta_2 \in \PP(\mX),
\end{equation}
where $\Gamma$ is a subset of real valued functions on $\mX$ \footnote{We will primarily 
focus on the case where $\Gamma$ is a class of functions that are Lipschitz with respect to a 
cost function $c$ as these are sufficient for our applications, but our results can be generalized to broader choices.}.
The idea is that if $\Gamma$ is sufficiently large then one can maximize
the right hand side by choosing a $\psi$ that can distinguish any distinct pair
of measures in $\PP(\mX)$ so that $\D(\eta_1, \eta_2) = 0$ if and only if
$\eta_1 = \eta_2$; this is referred to as the {\it divergence property} of $\D$ in
the parlance of \cite{birrell2022f}. We note that while the divergence property
is theoretically attractive, it is not necessary in many UQ applications. For example,
let $\psi^\star$ be a quantity of interest whose expectation with respect to $\nu$
is desired. Then we simply need to make sure that the set $\Gamma$ contains $\psi^\star$
so that we can trivially bound 
\begin{equation*}
  |\nu(\psi^\star) - \nu_\veps(\psi^\star)| \le \D(\nu, \nu_\veps),
\end{equation*}
which allows us to control the error of the quantity of interest.

IPMs have been used extensively
in the statistical theory \cite{muller1997integral} 
and more  recently in  machine learning \cite{sriperumbudur2009integral, birrell2022f}; 
popular examples include the maximum 
mean discrepancy (MMD) \cite{muandet2017kernel, arbel2019maximum} and 
the $f$-divergences of \cite{nowozin2016f}.  
IPMs further appear  naturally in connection with optimal transport (OT) and 
in the context of the dual form of Wasserstein metrics \cite{villani-OT}; a property that 
we will exploit in this work as well.
At the same time, closely related
OT semi-metrics have been used in the
convergence analysis of stochastic PDEs and function space
Markov chain Monte Carlo algorithms
\cite{hairer2011asymptotic, hairer2014spectral, johndrow-approximation, hosseini-spectralgapARSD}; these 
works were a major inspiration for our analysis.
Here we take advantage of the flexibility of IPMs 
and in particular adapt the choice of the space $\Gamma$ 
to the likelihood  potential $\Phi$ and obtain perturbation 
bounds in more general settings than previous works. 
This idea of adapting the IPM to the likelihood at hand is the key to
simplifying a lot of our arguments and leads to 
flexible bounds.

\subsection{Main Contributions}\label{sec:main-contributions}

Our main contributions are summarized below: 


\begin{itemize}
\item 
We measure the approximation rate of the posterior using 
abstract IPMs, which is a broad category of divergences for probability measures that encompasses, among others, the Wasserstein metric. Our approach adapts the choice of the IPM to the likelihood of the 
problem at hand and does not require any global 
Lipschitz-stability assumptions on the log-likelihood.
Theorems  \ref{thm:posterior-likelihood-perturbations-general} and \ref{thm:posterior-prior-perturbations-general} analyze posterior perturbations with 
regards to likelihood and prior perturbations, respectively while
 Corollaries \ref{data-pert-cor-1} and \ref{data-pert-cor-2}  examine perturbations of the data.

\item Multiple concrete applications in Section~\ref{sec:applications} demonstrate the power of our 
perturbation analysis. Here we obtain quantitative 
convergence rates for posterior measures that arise 
from approximations of priors including 
empirical approximations of the prior from 
a set of samples in Section~\ref{sec:empirical-prior};   
spectral perturbations of product priors such as 
Gaussian measures with Mat{\'e}rn kernels with varying 
hyper-parameters in Section~\ref{sec:matern-prior}; 
data-driven priors identified as  pushforwards of 
parameterized transport maps in Section~\ref{sec:pushforward-prior};
and deep neural net (DNN) 
surrogate models of the likelihood in Section~\ref{sec:surrogate-models}.

\end{itemize}

\subsection{Outline}\label{sec:outline} 
The rest of the paper is organized as follows: In Section~\ref{sec:prelims} we discuss mathematical preliminaries and IPMs. 
Our stability results are derived in Section~\ref{sec:pert-theory-post} while
applications are outlined in  Section~\ref{sec:applications}.

\section{Distance-like Functions and IPMs}\label{sec:prelims}
We collect here some basic properties and definitions pertaining to IPMs that will be used to obtain our perturbation results in Section~\ref{sec:pert-theory-post}.  We say that a function $\cc: \mX \times \mX \mapsto \mbb{R}$ is
{\it distance-like}  in the parlance of  \cite{hairer2011asymptotic} if 
it is positive, symmetric, lower-semicontinuous and such that $\cc(u,v) = 0$ if and only 
if $u =v$. Given such a distance-like  $\cc$ we define the
following subspace of 
$\PP(\mX)$:
\begin{equation*}
  P_1(\mX; c) : = \left\{ \eta \in \PP(\mX) \Sep \eta(  \cc(0, u) )  < +\infty  \right\}, 
\end{equation*}
as well as  the space of Lipschitz-1 functions on $\mX$ with respect to $\cc$,
\begin{equation*}
  \Lip(\mX; \cc) := \left\{  \psi: \mX \mapsto \mbb{R} \Sep \sup_{u\neq v}
    \frac{|\psi(u) - \psi(v)| }{\cc(u,v)} \le 1 \right\}.
\end{equation*}
We also write $\Lip^0(\mX; \cc)$ to denote the subspace of $\Lip(\mX; \cc)$ consisting of
functions that vanish at the origin. 
Finally, we define the following IPM resembling the Kantorovich-Rubinstein
functional \cite{bogachev2012monge}:
\begin{equation}\label{d-KR}
  \begin{aligned}
  \D(\eta_1, \eta_2; \cc) & :=   
  \sup_{\psi \in \Lip(\mX ; \cc)}  \eta_1( \psi) -  \eta_2( \psi),\\
  & \equiv \sup_{\psi \in \Lip^0(\mX ; \cc)}  \eta_1( \psi) -  \eta_2( \psi), \qquad \eta_1, \eta_2 \in \PP(\mX),
\end{aligned}
\end{equation}
where the equivalence between the two definitions is simply due to the fact that 
the value of the functional on the right hand side does not change if we 
shift $\psi$ by a constant.
It is known (see for example \cite[Particular Case 5.4]{villani-OT}) that if
$c$ satisfies a triangle inequality then 
the functional $\D$ satisfies a
duality theorem with respect to an OT cost function, i.e.,
\begin{equation}\label{K-W-duality}
  \D(\eta_1, \eta_2; \cc) = \W( \eta_1, \eta_2; \cc) := \inf_{\pi \in \Pi( \eta_1, \eta_2)}
  \int_{\mX \times \mX} c(u,v) \dd \pi(u, v),
\end{equation}
where $\Pi(\eta_1, \eta_2)$ is the space of all couplings between $\eta_1, \eta_2$, i.e., the
subspace of $\PP(\mX \times \mX)$ consisting of measures with marginals $\eta_1, \eta_2$. 
This further implies that if $\cc$ is a metric on $\mX$ then $\D$ defines a metric
on $P_1(\mX; \cc)$ \cite{bogachev2012monge,villani-OT}.
In the specific case where
$\cc(u,v) = \| u -v \|_\mX^p$  for $p \ge 1$ the functional $\K^{1/p}$ coincides with the well-known Wasserstein-$p$ metric for probability measures.

In Section~\ref{sec:pert-theory-post} we will 
need to work with cost functions $c$ that are not metrics but semi-metrics in the sense that 
they satisfy a weak triangle inequality (triangle inequality with a constant that is larger than one). 
In this case the functional $\D$ may no longer satisfy the duality \eqref{K-W-duality}, but 
 we still have access to the inequality 
\begin{equation}\label{D-le-W}
  \D(\eta_1, \eta_2; c) \le \W(\eta_1, \eta_2; c),
\end{equation}
thanks to the Lipschitz assumption on $\psi$. This identity will be very useful in 
Section~\ref{sec:applications}
where we use coupling arguments or known bounds on Wasserstein distances to bound $\D$. 
Furthermore, since we assumed $c$ is distance-like 
we can verify that $\W(\cdot, \cdot; c)$ satisfies 
the {\it divergence property} in the parlance of \cite{birrell2022f}, i.e., 
$\W(\eta_1, \eta_2; c) = 0$ if and only if $\eta_1 = \eta_2$. 
Then it follows from \eqref{D-le-W} that $\D(\eta_1, \eta_2; c) =0$ 
whenever $\eta_1 = \eta_2$. However, to show the converse result, and 
hence the divergence property of $\D$ we need an additional assumption on 
$\cc$. 

\begin{proposition}
  Suppose $c$ is distance-like and there exists a metric $\rho$  such that  $\rho(u, v) \le c(u,v)$, then $\D(\cdot, \cdot;c)$ 
  satisfies the divergence property.
\end{proposition}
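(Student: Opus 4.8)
The goal is to show that if $c$ is distance-like and dominated below by a genuine metric $\rho$ (i.e.\ $\rho(u,v) \le c(u,v)$ for all $u,v$), then $\D(\eta_1,\eta_2;c) = 0$ implies $\eta_1 = \eta_2$. The idea is to leverage the comparison $\rho \le c$ to reduce the divergence property of $\D(\cdot,\cdot;c)$ to the corresponding property of the classical Kantorovich--Rubinstein metric built from $\rho$, which is already known.

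First, I would observe that since $\rho \le c$ pointwise, any function $\psi$ that is $1$-Lipschitz with respect to $\rho$ is automatically $1$-Lipschitz with respect to $c$: indeed $|\psi(u)-\psi(v)| \le \rho(u,v) \le c(u,v)$. Hence $\Lip(\mX;\rho) \subseteq \Lip(\mX;c)$, and taking suprema over the smaller class on the left gives
\begin{equation*}
  \D(\eta_1,\eta_2;\rho) \;\le\; \D(\eta_1,\eta_2;c), \qquad \forall\, \eta_1,\eta_2 \in \PP(\mX).
\end{equation*}
Therefore $\D(\eta_1,\eta_2;c)=0$ forces $\D(\eta_1,\eta_2;\rho)=0$.

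Next I would invoke the classical theory: since $\rho$ is a metric it satisfies the triangle inequality, so by the duality \eqref{K-W-duality} we have $\D(\eta_1,\eta_2;\rho) = \W(\eta_1,\eta_2;\rho)$, and as noted in the text this makes $\D(\cdot,\cdot;\rho)$ a metric on $P_1(\mX;\rho)$ — in particular it separates points, so $\D(\eta_1,\eta_2;\rho)=0$ gives $\eta_1=\eta_2$. One caveat to address: the separation-of-points statement for the Kantorovich--Rubinstein metric is stated for measures in $P_1(\mX;\rho)$, so strictly speaking I should either (a) note that the standard fact "$\W_1$-type distance zero $\Rightarrow$ equality of measures" holds for \emph{all} Radon probability measures once one allows the value $+\infty$ (the supremum in \eqref{d-KR} over bounded Lipschitz test functions, e.g.\ $\psi = \rho(\,\cdot\,,u_0)\wedge R$, already detects any discrepancy), or (b) reduce to bounded Lipschitz functions directly: if $\D(\eta_1,\eta_2;\rho)=0$ then $\eta_1(\psi)=\eta_2(\psi)$ for every bounded $\rho$-Lipschitz $\psi$, and since the bounded Lipschitz functions are measure-determining on a separable metric space (they generate the Borel $\sigma$-algebra and are closed under the relevant operations), $\eta_1=\eta_2$. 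Combining this with the inequality from the previous step completes the argument.

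The only real subtlety — and the step I would be most careful about — is the last one: making sure the classical "distance zero implies equality" fact is applied in a form valid for arbitrary Radon measures on the separable Banach space $\mX$ rather than only for those with finite $\rho$-moment, since no moment assumption on $\eta_1,\eta_2$ is made in the proposition. Using bounded Lipschitz test functions (which lie in $\Lip(\mX;\rho)$ up to the harmless normalization, and whose integrals are always finite) sidesteps any integrability issue cleanly, so I would phrase the proof that way; everything else is a one-line monotonicity observation.
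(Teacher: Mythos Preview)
Your proposal is correct and follows essentially the same route as the paper: use $\rho \le c$ to get $\Lip(\mX;\rho) \subseteq \Lip(\mX;c)$, hence $\W(\cdot,\cdot;\rho) = \D(\cdot,\cdot;\rho) \le \D(\cdot,\cdot;c)$, and conclude via the divergence property of the genuine metric $\W(\cdot,\cdot;\rho)$. Your extra care about the $P_1(\mX;\rho)$ moment assumption (handled via bounded Lipschitz test functions) is a worthwhile refinement that the paper's proof leaves implicit.
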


\begin{proof}
Following our discussion above we only need to show that $\eta_1 \neq \eta_2$
implies $\D(\eta_1, \eta_2; c) >0$. We have
$\Lip(\mX; \rho) \subset \Lip(\mX; c)$ which together with the 
duality result for $\W(\cdot, \cdot; \rho)$ yields 
$\W(\cdot, \cdot; \rho) \le \D(\cdot, \cdot; c)$. The result follows 
since $\W(\cdot, \cdot; \rho)$ is a metric and therefore satisfies the divergence property.
\end{proof}

\section{Perturbation Theory for Posterior Measures}\label{sec:pert-theory-post}

In this section, we outline our main  theoretical contributions and show the 
Lipschitz stability of posteriors with respect to likelihood and prior perturbations in appropriate 
IPMs. More precisely, given the ground truth posterior $\nu$ as in \eqref{bayes-rule}
we can consider the measures
\begin{equation*}
    \frac{\dd \nu_\ast}{\dd \mu_\ast}(u) = \frac{1}{Z_\ast(y)} \exp \left( - \Phi(u; y) \right), 
    \quad \text{and} \quad 
    \frac{\dd \nu'_\ast}{\dd \mu_\ast}(u) = \frac{1}{Z_\ast'(y)} \exp \left( - \Phi'(u; y) \right), 
\end{equation*}
with the normalizing constants $Z_\ast(y)$ and $Z'_\ast(y)$ defined analogously to \eqref{bayes-rule}. Then if $\K$ is an IPM 
that satisfies a weak triangle inequality we can readily write 
\begin{equation*}
    \K( \nu, \nu'_\ast) \lesssim \K(\nu, \nu_\ast) + \K(\nu_\ast, \nu'_\ast),
\end{equation*} 
where the first term in the right hand side only involves prior perturbations while 
the second term concerns likelihood perturbations. Therefore, it is natural for us 
to consider each of these perturbations separately in this section.
We begin by outlining our main assumptions on the likelihood in Section~\ref{sec:likelihood-assumptions} followed by 
likelihood perturbations, including the case of perturbed data, in
Section~\ref{sec:pert-likel} and prior perturbations  in Section~\ref{sec:pert-prior}.

\subsection{Assumptions on The Likelihood}\label{sec:likelihood-assumptions}

Below we gather a set of assumptions on the likelihood potential $\Phi$ that are used throughout the rest of the
article. 
These are standard assumptions in
well-posedness theory of BIPs \cite{hosseini-convex,hosseini-sparse,latz,sprungk,stuart-acta-numerica, sullivan} that 
ensure the resulting posterior is well-defined. 

\begin{assumption}\label{assumptions-on-phi}
  The likelihood potential $\Phi: \mX \times \mY \mapsto \mbb{R}$ satisfies one or more of the following 
  assumptions : 
\begin{enumerate}[label={(\roman*)}]

\item ({\it Prior measurability}) The function $\Phi(\cdot; y) : \mX \mapsto \mbb R$
  is measurable with respect to the prior $\mu$ for any fixed $y \in \mY$.

\item ({\it Locally bounded  above and below}) There exists a
  lower-semicontinuous function
  $f: \mX \to \mbb{R}_{>0}$ and
  locally bounded functions $g: \mY \to \mbb{R}_{>0}$ and
  $h: \mX \times \mY \to \mbb{R}_{>0}$ so that
  $\forall (u,y) \in \mX \times \mY$
  it holds that
  \begin{equation*}
    -  \log f (u) - \log g(y)  \le \Phi(u;y) \le - \log h(u,y).
\end{equation*}

\end{enumerate}
\end{assumption}

With the above assumptions at hand we can immediately guarantee the existence and
uniqueness of the posterior measure $\nu$ for fixed data $y \in \mY$.

\begin{proposition}[{\cite[Thm.~4.3]{hosseini-sparse}}]
  \label{well-defined-posterior}
  Suppose $\Phi$ satisfies Assumptions~\ref{assumptions-on-phi} with
  $f \in L^1(\mX, \mu)$.
  Then for any fixed $y \in \mY$
 we have
$0<Z(y)< + \infty$ implying that   
$\nu$ is a well-defined probability measure.
\end{proposition}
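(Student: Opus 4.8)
The plan is to estimate the normalizing constant $Z(y) = \int_\mX \exp(-\Phi(u;y))\,\dd\mu(u)$ directly, from above and from below, with $y \in \mY$ fixed. By Assumption~\ref{assumptions-on-phi}(i) the integrand $u \mapsto \exp(-\Phi(u;y))$ is $\mu$-measurable and non-negative, so $Z(y)$ is well-defined in $[0,+\infty]$, and it remains only to exclude the two extreme values.

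For finiteness I would use the lower bound in Assumption~\ref{assumptions-on-phi}(ii): exponentiating $\Phi(u;y) \ge -\log f(u) - \log g(y)$ gives the pointwise estimate $\exp(-\Phi(u;y)) \le g(y)\,f(u)$. Since $g$ is locally bounded it is finite at the fixed point $y$, and $f \in L^1(\mX,\mu)$ by hypothesis; integrating the pointwise bound yields $Z(y) \le g(y)\,\|f\|_{L^1(\mX,\mu)} < +\infty$, which simultaneously certifies that the integrand is $\mu$-integrable.

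For strict positivity I would use the upper bound in Assumption~\ref{assumptions-on-phi}(ii): exponentiating $\Phi(u;y) \le -\log h(u,y)$ gives $\exp(-\Phi(u;y)) \ge h(u,y) > 0$ for every $u \in \mX$, so the integrand is strictly positive everywhere. Since $\mu$ is a probability measure, a strictly positive measurable function cannot integrate to zero: writing $\mX = \bigcup_{n\in\mbb{N}}\{u : \exp(-\Phi(u;y)) > 1/n\}$ as a countable union of measurable sets, at least one of them, say the $n$-th, has positive $\mu$-measure, on which the integrand is bounded below by $1/n$; hence $Z(y) \ge \tfrac1n \mu(\{\exp(-\Phi(\cdot;y)) > 1/n\}) > 0$.

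Combining the two estimates gives $0 < Z(y) < +\infty$, so $\dd\nu/\dd\mu = Z(y)^{-1}\exp(-\Phi(\cdot;y))$ is a well-defined non-negative $\mu$-density with total mass $1$, and $\nu$ is a Radon probability measure on $\mX$ (Radon regularity being inherited from $\mu$ since $\nu \ll \mu$). There is no substantive obstacle here; the only points requiring care are the measurability and integrability bookkeeping in the two bounds and the elementary fact that a pointwise-positive measurable function has positive integral against a probability measure, which is why I spell out the $\{>1/n\}$ decomposition rather than invoking it tacitly.
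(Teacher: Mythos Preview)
Your proof is correct. The paper does not supply its own proof of this proposition; it simply cites \cite[Thm.~4.3]{hosseini-sparse}, so there is nothing in the present text to compare against. Your argument---bounding $\exp(-\Phi(u;y))$ above by $g(y)f(u)$ and below by $h(u,y)>0$ via Assumption~\ref{assumptions-on-phi}(ii), then invoking $f\in L^1(\mu)$ for finiteness and the strict positivity of $h$ for non-degeneracy---is exactly the natural route and matches how such results are proved in the cited literature.
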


We highlight that while our assumptions are sufficient for ensuring the posterior is 
well-defined they are certainly not necessary. Indeed \cite{latz} obtains the same 
result with much weaker assumptions but we impose stronger assumptions, in particular 
condition (ii), in order to construct our IPMs and obtain our stability results.
At the same time condition (ii) is sufficiently general to encompass many interesting applications 
such as those outlined in Section~\ref{sec:applications}.

\subsection{Perturbations of the Likelihood}\label{sec:pert-likel}

In this section we study the perturbations of the posterior measure $\nu$ with respect to
perturbations of the likelihood $\Phi$. Such perturbations arise 
naturally in the discretization of function space BIPs and
 surrogate modelling of expensive likelihoods \cite{marzouk2007stochastic, schwab-sparse, marzouk2009dimensionality, ma2009efficient, stuart2018posterior, teckentrup2020convergence, herrmann2020deep, li2020fourier}.

Consider the modified
likelihood potential $\Phi'( u; y)$ and in turn define the perturbed posterior $\nu' \in \PP(\mX)$: 
\begin{equation}
  \label{Bayes-rule-perturbed-likelihood}
  \frac{\dd \nu'}{\dd \mu} = \frac{1}{Z'(y)} \exp \left( - \Phi'(u; y) \right),
  \qquad Z'(y) := \int_\mX \exp( - \Phi'(u;y)) \dd \mu(u).
\end{equation}
We develop our perturbation theory in this general setting and 
consider two specific settings later where
either the data $y$ is perturbed or the likelihood potential $\Phi$ is itself modified
due to approximations.

\begin{theorem}\label{thm:posterior-likelihood-perturbations-general}
  Suppose $\Phi$ and $\Phi'$ satisfy Assumption~\ref{assumptions-on-phi} 
  with the same functions  $f,g ,h$,
  and let $p, q \in [1, \infty]$
  satisfy $1/p + 1/q = 1$. 
  Then it holds that
  \begin{equation*}
    \K(\nu, \nu'; \cc) \le
    \frac{ 2 g^2(y) \| f c(\cdot, 0) \|_{L^p(\mu)} \| f \|_{L^p(\mu)}}{\| h(\cdot, y) \|_{L^1(\mu)}^2}
  \| \Phi(\cdot; y) - \Phi'(\cdot; y) \|_{L^q(\mu)}\,,\quad \text{ for all } y\in\mY\,.
    \end{equation*}
  \end{theorem}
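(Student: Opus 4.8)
The plan is to bound $\K(\nu,\nu';c)$ by testing against an arbitrary $\psi \in \Lip^0(\mX;c)$ and estimating $\nu(\psi) - \nu'(\psi)$ directly. First I would write out the difference using Bayes' rule \eqref{bayes-rule} and \eqref{Bayes-rule-perturbed-likelihood}:
\begin{equation*}
  \nu(\psi) - \nu'(\psi) = \frac{1}{Z(y)} \int_\mX \psi(u) e^{-\Phi(u;y)} \dd\mu(u) - \frac{1}{Z'(y)} \int_\mX \psi(u) e^{-\Phi'(u;y)} \dd\mu(u).
\end{equation*}
The standard trick is to add and subtract a common term, e.g. $\frac{1}{Z(y)} \int \psi e^{-\Phi'} \dd\mu$, so that the difference splits into (a) a term controlling $\int \psi (e^{-\Phi} - e^{-\Phi'}) \dd\mu$ and (b) a term controlling $(1/Z - 1/Z') \int \psi e^{-\Phi'}\dd\mu$, i.e. the mismatch in normalizing constants $|Z - Z'|$. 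Since $\psi$ vanishes at the origin, $|\psi(u)| = |\psi(u) - \psi(0)| \le c(u,0)$, so wherever $\psi$ appears it can be replaced by $c(\cdot,0)$.

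Next I would control each ingredient using Assumption~\ref{assumptions-on-phi}(ii). The lower bound $\Phi \le -\log h(\cdot,y)$ is not what we want for the integrands $e^{-\Phi}$; rather the bound $\Phi(u;y) \ge -\log f(u) - \log g(y)$ gives $e^{-\Phi(u;y)} \le f(u) g(y)$, and this applies to $\Phi'$ as well since they share $f,g,h$. For the pointwise difference of exponentials I would use $|e^{-a} - e^{-b}| \le (e^{-a} \vee e^{-b}) |a - b| \le f(u) g(y) |\Phi(u;y) - \Phi'(u;y)|$; combined with $|\psi(u)| \le c(u,0)$ and Hölder with exponents $p,q$, the numerator term (a) becomes $\le g(y) \|f\, c(\cdot,0)\|_{L^p(\mu)} \|\Phi(\cdot;y) - \Phi'(\cdot;y)\|_{L^q(\mu)}$. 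For the denominators, $Z(y) = \int e^{-\Phi} \dd\mu \ge \int h(\cdot,y) \dd\mu = \|h(\cdot,y)\|_{L^1(\mu)}$ (using $\Phi \le -\log h$), and likewise $Z'(y) \ge \|h(\cdot,y)\|_{L^1(\mu)}$; this is where the $\|h(\cdot,y)\|_{L^1(\mu)}^2$ in the denominator and one factor $g^2(y)$ enter (a second $g(y)$ comes from bounding $Z$ or the $e^{-\Phi'}$ factor in term (b) from above). For term (b), $|Z(y) - Z'(y)| \le \int |e^{-\Phi} - e^{-\Phi'}| \dd\mu \le g(y) \|f\|_{L^p(\mu)} \|\Phi(\cdot;y) - \Phi'(\cdot;y)\|_{L^q(\mu)}$ by the same exponential-difference bound and Hölder, and $|\int \psi e^{-\Phi'} \dd\mu| \le g(y) \|f\, c(\cdot,0)\|_{L^1(\mu)} \le g(y)\|f\,c(\cdot,0)\|_{L^p(\mu)}$ after absorbing a $\mu$-mass factor (or more directly bounding by the $L^p$ norm times $1$). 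Assembling (a) and (b), each contributes a term of the stated form, and the factor $2$ accounts for the two contributions.

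The main obstacle I anticipate is bookkeeping the $g(y)$ powers and the $f$-weights consistently: term (a) naturally carries $\|f\,c(\cdot,0)\|_{L^p}$ while term (b) carries $\|f\|_{L^p}$, and one must be careful that after dividing by $Z Z' \ge \|h(\cdot,y)\|_{L^1}^2$ the surviving exponential upper bounds contribute exactly $g^2(y)$ (one from the numerator estimate, one from estimating $\int \psi e^{-\Phi'}\dd\mu$ or $Z'$ from above), matching the claimed constant. A secondary subtlety is the $p = \infty$ endpoint: when $p = \infty$ one should check $f\,c(\cdot,0) \in L^\infty(\mu)$ and $f \in L^\infty(\mu)$ suffice and that the $L^q = L^1$ Hölder pairing still closes; this is routine. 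Finally, since the resulting bound is independent of the chosen $\psi \in \Lip^0(\mX;c)$, taking the supremum over $\psi$ and invoking the second form of \eqref{d-KR} yields the claimed estimate on $\K(\nu,\nu';c)$.
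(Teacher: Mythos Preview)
Your proposal is correct and follows essentially the same approach as the paper: split $\nu(\psi)-\nu'(\psi)$ into a numerator-difference term and a normalizing-constant term, control $|e^{-\Phi}-e^{-\Phi'}|$ via the mean value inequality together with $e^{-\Phi}\le g(y)f$, use $|\psi|\le c(\cdot,0)$ and H\"older, and bound $Z,Z'$ below by $\|h(\cdot,y)\|_{L^1(\mu)}$. The only cosmetic difference is that you add and subtract $\frac{1}{Z}\int\psi e^{-\Phi'}\dd\mu$ while the paper adds and subtracts $\frac{1}{Z'}\int\psi e^{-\Phi}\dd\mu$; the missing piece in your bookkeeping sketch, namely how the single-$g(y)$, single-$1/Z$ term (a) is upgraded to match the $g^2(y)/(ZZ')$ form, is handled in the paper exactly as you anticipate, by multiplying and dividing by the other normalizing constant and invoking $Z\le g(y)\|f\|_{L^1(\mu)}\le g(y)\|f\|_{L^p(\mu)}$.
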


  \begin{remark}
    Note that the requirement that $\Phi'$ satisfies Assumption~\ref{assumptions-on-phi} with
    the same functions $f, g, h$ as $\Phi$ is innocuous. In fact, if $\Phi'$ satisfies
    that assumption with different functions $f', g', h'$ then
    Theorem~\ref{thm:posterior-likelihood-perturbations-general} holds by replacing
    $f$ with
    $f \vee f'$, $g$ with $g \vee g'$, and $h(\cdot, y)$ with $h(\cdot, y) \wedge h'(\cdot, y)$ 
    \footnote{Given $a, b \in \R$ we write $a \vee b$ to denote 
    their maximum and $a \wedge b$ to denote their minimum.}.
    Moreover, since both $\nu$ and $\nu'$ are absolutely continuous with respect
    to $\mu$ one can relax Assumption~\ref{assumptions-on-phi}(ii) to hold
    for $\mu$-a.e. $u$ rather than all of $\mX$.
\end{remark}

\begin{proof}
  Our proof follows similar steps to the proof of \cite[Thm.~14]{sprungk}.  
  By the definition of $\K$ and the measures $\nu, \nu'$ we have 
  \begin{equation*}
    \begin{aligned}
    \K( \nu, \nu'; \cc) & = \sup_{\psi \in \Lip^0(\mX; \cc)}
    \frac{1}{Z(y)} \int_\mX \psi(u) \exp( - \Phi(u;y)) \dd \mu(u) \\
    & \qquad \qquad  \qquad -  \frac{1}{Z'(y)} \int_\mX \psi(u) \exp( - \Phi'(u;y)) \dd \mu(u) \\
    & \le \frac{1}{Z'(y)}  \sup_{\psi \in \Lip^0(\mX; \cc)} \int_\mX \psi(u)
    \left[ \exp(-\Phi(u; y)) - \exp(- \Phi'(u;y) ) \right] \dd \mu(u)  \\
    & \quad  + \frac{| Z(y) - Z'(y)|}{Z(y) Z'(y)} \sup_{\psi \in \Lip^0(\mX; \cc)} \int_\mX \psi(u) \exp( - \Phi(u;y)) \dd \mu(u)
    =: T_1 + T_2.   
   \end{aligned}
 \end{equation*}
 First, we bound $T_1$ using the mean value theorem for the exponential function
 along with the fact that Assumption~\ref{assumptions-on-phi}(ii) is satisfied by
 $\Phi$ and $\Phi'$. 
 \begin{equation*}
   \begin{aligned}
     Z'(y) T_1
     & \le \sup_{\psi \in \Lip^0(\mX; \cc)}
     \int_\mX | \psi (u) | \exp \left( - \big[ \Phi(u;y) \wedge \Phi'(u;y) \big] \right)
     | \Phi(u;y) - \Phi'(u;y) | \dd \mu(u) \\
     & \le g(y)
        \sup_{\psi \in \Lip^0(\mX; \cc)} \int_\mX |\psi (u)| f( u)  | \Phi(u;y) - \Phi'(u;y) | \dd \mu(u).
   \end{aligned}
 \end{equation*}
 Now using the fact that functions in $\Lip^0(\mX; \cc)$ are bounded by $\cc(\cdot , 0)$ and
 H{\"o}lder's inequality we can further bound $T_1$ as follows:
 \begin{equation*}
   \begin{aligned}
     Z'(y)  T_1  &  \le  g(y)
     \int_\mX \cc(u, 0) f(u) | \Phi(u;y) - \Phi'(u;y) | \dd \mu(u) \\
     &\le g(y) \| f \cc(\cdot, 0) \|_{L^p(\mu)} 
     \left\| \Phi(\cdot ;y) - \Phi'(\cdot; y) \right\|_{L^q(\mu)}.  
   \end{aligned}
 \end{equation*}
 By applying similar arguments to $T_2$ we can write
 \begin{equation*}
   \begin{aligned}
     Z(y) Z'(y) T_2
     & \le | Z(y) - Z'(y)|  g(y)  \int_\mX c(u, 0) f(u) \dd \mu(u) \\
     & \le | Z(y) - Z'(y)| g(y) \| f c(\cdot, 0) \|_{L^p(\mu)}.
 \end{aligned}
 \end{equation*}
 Next we bound $| Z(y)- Z'(y)|$.
 \begin{equation*}
   \begin{aligned}
   | Z(y) - Z'(y)| & = \left| \int_\mX \exp( - \Phi(u;y) ) 
     - \exp( - \Phi'(u;y) )  \dd \mu(u)
   \right| \\
   & \le \left| \int_\mX \exp \left( - \big[\Phi(u;y) \wedge \Phi'(u;y)  \big] \right)
       | \Phi(u;y) - \Phi'(u;y) | \dd \mu  \right| \\
   & \le g(y)   \left| \int_\mX f(u) |  \Phi(u;y)  - \Phi'(u;y) |  \dd \mu(u)
   \right| \\
   & \le g(y)  \| f \|_{L^p(\mu)} \left\|  \Phi(\cdot ;y)  - \Phi'(\cdot;y) \right\|_{L^q(\mu)}.
 \end{aligned}
\end{equation*}
Putting this bound together with our previous bound on $T_2$ yields,
\begin{equation*}
  T_2 \le \frac{g^2(y)}{Z(y) Z'(y)}  \| f\|_{L^p(\mu)} \| f c(\cdot, 0) \|_{L^p(\mu)}
  \| \Phi(\cdot; y) - \Phi'(\cdot; y) \|_{L^q(\mu)}.
\end{equation*}
By combining the bounds for $T_1, T_2$ we then obtain
\begin{equation}\label{K-likelihood-pert-prelim-bound}
  \K( \nu, \nu'; c)
  \le \frac{g(y) \| f c(\cdot, 0) \|_{L^p(\mu)} \Big[ Z(y) +  g(y) \| f \|_{L^p(\mu)}  \Big]}{Z(y) Z'(y)}
  \| \Phi(\cdot; y) - \Phi'(\cdot; y) \|_{L^q(\mu)}.
\end{equation}
However, since $\Phi$ satisfies  Assumption~\ref{assumptions-on-phi}(ii) we have that
\begin{equation}\label{upper-bound-on-Z-y}
  Z(y)  \le g(y)  \int_\mX  f(u)  \dd \mu(u) \le g(y) \| f\|_{L^1(\mu)} \le  g(y) \| f\|_{L^p(\mu)},
\end{equation}
and since Assumption~\ref{assumptions-on-phi}(ii) is also satisfied by $\Phi'$ we
obtain the lower bound 
\begin{equation*}
  Z(y) Z'(y) \ge
  \left( \int_\mX h(u,y) \dd \mu(u) \right)^2 = \| h(\cdot, y) \|_{L^1(\mu)}^2.
\end{equation*}
Substituting these bounds back into \eqref{K-likelihood-pert-prelim-bound} yields the desired
result.
{}
\end{proof}

The above proof immediately reveals the following corollary.

\begin{corollary}\label{cor:posterior-likelihood-perturbations-general}
        The result of Theorem~\ref{thm:posterior-likelihood-perturbations-general} is also 
        true with the bound \eqref{K-likelihood-pert-prelim-bound} which is 
        explicit in terms of the constants $Z(y)$ and $Z'(y)$.
\end{corollary}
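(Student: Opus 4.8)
The plan is to observe that no new estimate is required: the inequality \eqref{K-likelihood-pert-prelim-bound} is produced \emph{inside} the proof of Theorem~\ref{thm:posterior-likelihood-perturbations-general}, at the stage where the bounds on $T_1$ and $T_2$ have been combined but \emph{before} the final substitutions are carried out. Concretely, the derivation of \eqref{K-likelihood-pert-prelim-bound} uses only Assumption~\ref{assumptions-on-phi}(ii) for both $\Phi$ and $\Phi'$ (via the mean value theorem applied to the exponential in the estimates for $T_1$, $T_2$, and $|Z(y)-Z'(y)|$), the pointwise bound $|\psi(u)| \le c(u,0)$ valid for $\psi \in \Lip^0(\mX;c)$, and H\"older's inequality; in particular it does not invoke the coarser estimate \eqref{upper-bound-on-Z-y} on $Z(y)$ or the lower bound $Z(y)Z'(y) \ge \|h(\cdot,y)\|_{L^1(\mu)}^2$. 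Hence \eqref{K-likelihood-pert-prelim-bound} already holds under exactly the hypotheses of Theorem~\ref{thm:posterior-likelihood-perturbations-general}, and the proof of the corollary amounts to citing that intermediate line.

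I would close the loop with a one-sentence sanity check: since $Z(y) \le g(y)\|f\|_{L^p(\mu)}$ by \eqref{upper-bound-on-Z-y} and $Z(y)Z'(y) \ge \|h(\cdot,y)\|_{L^1(\mu)}^2$, replacing $Z(y)$ in the numerator and $Z(y)Z'(y)$ in the denominator of \eqref{K-likelihood-pert-prelim-bound} by these respective bounds only enlarges the right-hand side and reproduces precisely the constant $2 g^2(y)\|f c(\cdot,0)\|_{L^p(\mu)}\|f\|_{L^p(\mu)} / \|h(\cdot,y)\|_{L^1(\mu)}^2$ of Theorem~\ref{thm:posterior-likelihood-perturbations-general}, confirming that \eqref{K-likelihood-pert-prelim-bound} is genuinely the sharper, $Z$-explicit form of the same conclusion. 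There is no real obstacle here: the only "work" is the bookkeeping remark that the intermediate estimate in the earlier proof is self-contained and independent of the subsequent simplifications, so the corollary follows immediately.
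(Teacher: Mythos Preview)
Your proposal is correct and matches the paper's own treatment exactly: the paper simply notes that ``the above proof immediately reveals the following corollary,'' since \eqref{K-likelihood-pert-prelim-bound} is precisely the intermediate bound obtained before the substitutions \eqref{upper-bound-on-Z-y} and $Z(y)Z'(y)\ge \|h(\cdot,y)\|_{L^1(\mu)}^2$ are applied. Your additional sanity check recovering the theorem's constant from \eqref{K-likelihood-pert-prelim-bound} is a nice confirmation but not needed.
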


\begin{remark}\label{rem:brittleness}
Equation \eqref{K-likelihood-pert-prelim-bound} also reveals a delicate property of 
our bounds, namely that as $Z(y)$ or $Z'(y)$ vanish the Lipschitz constants in our 
bounds blow up. This phenomenon 
was also observed in \cite{sprungk} and is a manifestation of brittleness of BIPs
\cite{owhadi2015brittleness, owhadi2016brittleness, owhadi2015brittleness-b}, i.e., if the model is mis-specified in the sense that the 
likelihood is concentrated in an area of low prior probability, then 
the posterior measure is very sensitive to perturbations of the likelihood, the data, 
and the prior. 
\end{remark}

We also note that Theorem 
\ref{thm:posterior-likelihood-perturbations-general} extends previous works also by utilizing $L^q(\mu)$-norm with $q\in [1,\infty]$ in the upper bound instead of mixed $L^1(\mu)$ and $L^2(\mu)$ norms in \cite{sprungk}. In particular, the $L^\infty$-case opens the door to the study of perturbations obtained by neural network surrogate models such as those studied in \cite{herrmann2020deep} and discussed in Section~\ref{sec:surrogate-models}.

\subsubsection{Data Perturbations}\label{sec:data-perturbation}
In this section we discuss some corollaries of Theorem~\ref{thm:posterior-likelihood-perturbations-general}
that are pertinent to applications where the perturbation in the likelihood
potential $\Phi$ is due to perturbations of the data $y$. Proving the continuous dependence of the
posterior on the data $y$ is a central question in  well-posedness theory of BIPs
\cite{stuart-acta-numerica,hosseini-convex,hosseini-sparse,sullivan} establishing
the continuity of the data to posterior  map.

To this end, we consider  $y, y' \in \mY$, thinking of $y'$ as a perturbation of
$y$. Then applying Theorem~\ref{thm:posterior-likelihood-perturbations-general}
with $\Phi'(u;y) \equiv \Phi( u; y + (y' -y))$ immediately yields
the following:

\begin{corollary}\label{data-pert-cor-1}
  Suppose $\Phi$ satisfies Assumption~\ref{assumptions-on-phi}  and
  let $p,q \in [1, \infty]$ satisfy $1/p + 1/q = 1$.
  For fixed $y \in \mY$ and $r >0$ consider the posterior measures
  \begin{equation*}
    \frac{\dd \nu'}{\dd \mu} (u) = \frac{1}{Z(y')} \exp( - \Phi(u; y')), \qquad \forall y' \in B^\mY_r(y),
  \end{equation*}
  where $B^\mY_r(y) \subset \mY$ denotes the ball of radius $r$ centered at $y$.
  Then it holds that
  \begin{equation*}
    \K(\nu, \nu'; c) \le \frac{2 \left( \sup_{z \in B^\mY_r(y)}  g^2(z) \right)
      \| f c(\cdot, 0) \|_{L^p(\mu)} \| f \|_{L^p(\mu)} }{ \| \inf_{z \in B^\mY_r(y)} h(\cdot, z) \|_{L^1(\mu)}^2}
    \| \Phi(\cdot; y) - \Phi(\cdot; y') \|_{L^q(\mu)}.
  \end{equation*}
\end{corollary}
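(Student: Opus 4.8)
The plan is to obtain Corollary~\ref{data-pert-cor-1} as a specialization of Theorem~\ref{thm:posterior-likelihood-perturbations-general} and the remark following it. Fix $y \in \mY$ and $y' \in B^\mY_r(y)$, and introduce the auxiliary likelihood potential $\Phi'(u; z) := \Phi(u; z + (y' - y))$, so that $\Phi'(\cdot; y) = \Phi(\cdot; y')$ and the perturbed posterior appearing in the statement is exactly the measure $\nu'$ attached to $\Phi'$ via \eqref{Bayes-rule-perturbed-likelihood} at the data point $y$. First I would verify that $\Phi'$ satisfies Assumption~\ref{assumptions-on-phi}: part (i) holds because $\Phi'(\cdot; z) = \Phi(\cdot; z + y' - y)$ is $\mu$-measurable for every $z$, and part (ii) holds with the shifted triple $f$, $g'(\cdot) := g(\cdot + y' - y)$, $h'(\cdot, \cdot) := h(\cdot, \cdot + y' - y)$.

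Since $\Phi$ and $\Phi'$ satisfy Assumption~\ref{assumptions-on-phi}(ii) with (possibly) different functions, I would invoke the remark after Theorem~\ref{thm:posterior-likelihood-perturbations-general}, which permits the theorem to be applied after replacing $g$ by $g \vee g'$ and $h(\cdot, y)$ by $h(\cdot, y) \wedge h'(\cdot, y)$. Evaluated at the fixed data point $y$ this gives $g(y) \vee g(y')$ in place of $g(y)$, $h(\cdot, y) \wedge h(\cdot, y')$ in place of $h(\cdot, y)$, and $\|\Phi(\cdot;y) - \Phi'(\cdot;y)\|_{L^q(\mu)} = \|\Phi(\cdot;y) - \Phi(\cdot;y')\|_{L^q(\mu)}$, so Theorem~\ref{thm:posterior-likelihood-perturbations-general} yields
\begin{equation*}
  \K(\nu, \nu'; c) \le \frac{2 (g(y) \vee g(y'))^2 \, \|f c(\cdot,0)\|_{L^p(\mu)} \|f\|_{L^p(\mu)}}{\|h(\cdot,y) \wedge h(\cdot,y')\|_{L^1(\mu)}^2} \, \|\Phi(\cdot;y) - \Phi(\cdot;y')\|_{L^q(\mu)}.
\end{equation*}

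To conclude I would pass to bounds uniform in $y' \in B^\mY_r(y)$: since $y \in B^\mY_r(y)$ we have $g(y) \vee g(y') \le \sup_{z \in B^\mY_r(y)} g(z)$, while $h(\cdot,y) \wedge h(\cdot,y') \ge \inf_{z \in B^\mY_r(y)} h(\cdot,z)$ pointwise on $\mX$, whence $\|h(\cdot,y) \wedge h(\cdot,y')\|_{L^1(\mu)} \ge \|\inf_{z \in B^\mY_r(y)} h(\cdot,z)\|_{L^1(\mu)}$; inserting these two monotone comparisons into the last display produces exactly the asserted inequality. The argument is essentially bookkeeping, and I expect no genuine obstacle; the only points deserving a line of justification are the measurability of $u \mapsto \inf_{z \in B^\mY_r(y)} h(u,z)$ (take the infimum over a countable dense subset of the ball, using lower semicontinuity / joint measurability of $h$ and separability of $\mY$), and the tacit requirements that $\sup_{z \in B^\mY_r(y)} g(z) < \infty$ and $\|\inf_{z \in B^\mY_r(y)} h(\cdot,z)\|_{L^1(\mu)} > 0$ — should either fail, the stated bound holds vacuously.
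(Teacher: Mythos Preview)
Your proposal is correct and follows precisely the route the paper sketches: apply Theorem~\ref{thm:posterior-likelihood-perturbations-general} with the shifted potential $\Phi'(u;y) \equiv \Phi(u; y + (y'-y))$, invoke the accompanying remark to merge the bounding functions, and then pass to the uniform bounds over $B^\mY_r(y)$. You have merely filled in the bookkeeping that the paper omits.
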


In many applications the likelihood potential $\Phi$ satisfies stronger regularity constraints
that allow  us to control  $\K(\nu, \nu'; c)$ in terms of the size of the
perturbation in the data. 

\begin{corollary}
	\label{data-pert-cor-2}
 Suppose the conditions of Corollary~\ref{data-pert-cor-1} are satisfied
  and in addition,
  $\forall y' \in B^\mY_r(y)$ there 
  exists a function $b: \mX \to \mbb R_{\ge 0}$ so that for $\mu$-a.e. $u$,
  \begin{equation*}
    | \Phi(u; y) - \Phi(u; y') | \le b(u) \| y - y'\|_\mY, \qquad \forall y' \in B^\mY_r(y). 
  \end{equation*}
  Then it holds that
  \begin{equation*}
      \K(\nu, \nu'; c) \le \frac{2 \left( \sup_{z \in B^\mY_r(y)}  g^2(z) \right)
      \| f c(\cdot, 0) \|_{L^p(\mu)} \| f \|_{L^p(\mu)} \| b \|_{L^q(\mu)} }{ \| \inf_{z \in B^\mY_r(y)} h(\cdot, z) \|_{L^1(\mu)}^2} \| y - y' \|_{\mY}.
  \end{equation*}
\end{corollary}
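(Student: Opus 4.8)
The plan is to obtain Corollary~\ref{data-pert-cor-2} as an immediate consequence of Corollary~\ref{data-pert-cor-1}, the only additional ingredient being the pointwise Lipschitz hypothesis on $y \mapsto \Phi(u;y)$, which lets us replace the $L^q(\mu)$-norm of the potential difference by a constant multiple of $\|y-y'\|_\mY$.

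Concretely, I would first fix $y' \in B_r^\mY(y)$ and apply Corollary~\ref{data-pert-cor-1}, which already yields
\begin{equation*}
  \K(\nu, \nu'; c) \le \frac{2 \left( \sup_{z \in B^\mY_r(y)} g^2(z) \right) \| f c(\cdot, 0) \|_{L^p(\mu)} \| f \|_{L^p(\mu)} }{ \| \inf_{z \in B^\mY_r(y)} h(\cdot, z) \|_{L^1(\mu)}^2} \, \big\| \Phi(\cdot; y) - \Phi(\cdot; y') \big\|_{L^q(\mu)}.
\end{equation*}
It then remains only to estimate the last factor. Using the added hypothesis, for $\mu$-a.e.\ $u$ we have $|\Phi(u;y) - \Phi(u;y')| \le b(u)\,\|y-y'\|_\mY$, and since $\|y-y'\|_\mY$ does not depend on $u$ it factors out of the norm; taking $L^q(\mu)$-norms on both sides (for any $q \in [1,\infty]$, reading the $q=\infty$ case as an essential supremum) gives
\begin{equation*}
  \big\| \Phi(\cdot; y) - \Phi(\cdot; y') \big\|_{L^q(\mu)} \le \| b \|_{L^q(\mu)} \, \| y - y' \|_\mY.
\end{equation*}
Substituting this into the previous display produces exactly the claimed inequality.

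I do not expect a genuine obstacle here: the argument is a one-line substitution. The only points meriting a sentence of care are that $b$ is (tacitly) measurable so that $\|b\|_{L^q(\mu)}$ is well defined, that it is uniform over the ball $B_r^\mY(y)$ so that the resulting bound is independent of the particular perturbation $y'$, and that the monotonicity of the $L^q(\mu)$-norm under $\mu$-a.e.\ pointwise domination is invoked correctly — all routine. Hence the corollary follows directly from Corollary~\ref{data-pert-cor-1}.
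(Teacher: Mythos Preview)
Your proposal is correct and matches the paper's intended approach: the paper states Corollary~\ref{data-pert-cor-2} without proof, treating it as an immediate consequence of Corollary~\ref{data-pert-cor-1} via the Lipschitz hypothesis on $\Phi$ in the data variable, exactly as you outline.
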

In particular, the above result implies that if
$\| \inf_{z \in B_r(y)} h(\cdot, z) \|_{L^1(\mu)} >0$, $f, f c(\cdot, 0) \in L^p(\mu)$, 
and $b \in L^q(\mu)$,
then the  mapping $y \mapsto \nu$  is locally Lipschitz continuous with respect to $\K$. 

\subsection{Perturbations of the Prior}\label{sec:pert-prior}
Here we turn our attention to perturbation properties of the posterior $\nu$ with
respect to the prior measure $\mu$. More precisely, we consider a prior measure
$\mu_\ast \in \PP(\mX)$ and define the posterior measure $\nu_\ast \in \PP(\mX)$:
\begin{equation}
  \label{Bayes-rule-perturbed-prior}
  \frac{\dd \nu_\ast}{\dd \mu_\ast} = \frac{1}{Z_\ast(y)} \exp \left( - \Phi(u; y) \right).
\end{equation}

Once again we develop our perturbation theory in this general setting and later make
our main theorem concrete in the context of various applications. The main difference in the case of prior perturbations
as compared to likelihood perturbations is that here we will need to adjust the 
function $c$ with respect to which $\D$ is defined
 in order to bound the distance between the posterior measures.
 More precisely, the prior and posterior perturbations should be measured with 
 respect to different IPM defined via related distance-like functions.

\begin{theorem}\label{thm:posterior-prior-perturbations-general}
  Suppose $\Phi$ satisfies Assumption~\ref{assumptions-on-phi} and in addition
     there exists a  
    $L: \mX \times \mX \times \mY \to \mbb{R}_{\ge 0}$ 
    so that
  \begin{equation*}
    | \Phi(u; y) - \Phi(v; y) | \le L(u,v; y) \cc( u, v) 
  \end{equation*}
  for all $u,v \in \mX \times \mX$ and $y \in \mY$ and $L(\cdot, \cdot; y)$ is 
  lower-semicontinuous for any fixed value of $y$.
  Then it holds that
  \begin{equation*}
    \K(\nu, \nu_\ast; \cc) \le  \frac{g^2(y) \left[ \| f\|_{L^1(\mu)} +   \| f c(\cdot, 0) \|_{L^1(\mu)} \right] }
    {\| h(\cdot, y) \|_{L^1(\mu)} \| h(\cdot, y) \|_{L^1(\mu_\ast)}}  \K(\mu, \mu_\ast; \cc_y),
  \end{equation*}
  where the new distance-like function $\cc_y : \mX \times \mX \to \mbb{R}_{\ge 0}$ is defined as
  \begin{equation}\label{def:c-ast}
    \cc_y(u,v) := \big[1 \vee \cc(u,0) \vee \cc(v,0)\big] \cdot
    \big[f(u) \vee f(v)\big] \cdot
    \big[ 1 \vee L(u,v; y) \big] \cdot \cc(u,v).
  \end{equation}
\end{theorem}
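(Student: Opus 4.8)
The plan is to follow the same architecture as the proof of Theorem~\ref{thm:posterior-likelihood-perturbations-general}: start from the IPM definition, split the difference $\nu(\psi)-\nu_\ast(\psi)$ into a term capturing the change in the unnormalized measure and a term capturing the change in the normalizing constant, and then bound each piece by $\K(\mu,\mu_\ast;\cc_y)$ instead of by an $L^q$ norm of a likelihood difference. Concretely, for $\psi\in\Lip^0(\mX;\cc)$ write
\begin{equation*}
  \nu(\psi)-\nu_\ast(\psi)
  = \frac{1}{Z(y)}\mu\!\left(\psi e^{-\Phi}\right)
    - \frac{1}{Z_\ast(y)}\mu_\ast\!\left(\psi e^{-\Phi}\right),
\end{equation*}
and insert and subtract $\frac{1}{Z(y)}\mu_\ast(\psi e^{-\Phi})$, so that the first resulting term is $\frac{1}{Z(y)}\bigl[\mu(\psi e^{-\Phi})-\mu_\ast(\psi e^{-\Phi})\bigr]$ and the second is $\bigl(\tfrac{1}{Z(y)}-\tfrac{1}{Z_\ast(y)}\bigr)\mu_\ast(\psi e^{-\Phi})$. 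The key observation is that both terms involve quantities of the form $\mu(\Psi)-\mu_\ast(\Psi)$ where $\Psi$ is, respectively, $\psi e^{-\Phi}$ or $e^{-\Phi}$; if we can show $\Psi/\lambda \in \Lip^0(\mX;\cc_y)$ (up to an additive constant, which is harmless by the shift-invariance noted after \eqref{d-KR}) for an explicit Lipschitz constant $\lambda$, then $\mu(\Psi)-\mu_\ast(\Psi)\le \lambda\,\K(\mu,\mu_\ast;\cc_y)$.

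The core technical step, and the one I expect to be the main obstacle, is the Lipschitz estimate for $\psi e^{-\Phi}$ with respect to $\cc_y$. Using $|ab-a'b'|\le |a||b-b'|+|b'||a-a'|$ with $a=\psi(u)$, $a'=\psi(v)$, $b=e^{-\Phi(u;y)}$, $b'=e^{-\Phi(v;y)}$, one gets
\begin{equation*}
  |\psi(u)e^{-\Phi(u;y)}-\psi(v)e^{-\Phi(v;y)}|
  \le |\psi(u)|\,|e^{-\Phi(u;y)}-e^{-\Phi(v;y)}| + e^{-\Phi(v;y)}\,|\psi(u)-\psi(v)|.
\end{equation*}
For the second summand, bound $e^{-\Phi(v;y)}\le g(y) f(v)\le g(y)\,(f(u)\vee f(v))$ and $|\psi(u)-\psi(v)|\le \cc(u,v)$. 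For the first summand, use the mean value theorem for the exponential plus Assumption~\ref{assumptions-on-phi}(ii) to get $|e^{-\Phi(u;y)}-e^{-\Phi(v;y)}|\le g(y)\,(f(u)\vee f(v))\,|\Phi(u;y)-\Phi(v;y)|\le g(y)\,(f(u)\vee f(v))\,L(u,v;y)\,\cc(u,v)$, and bound $|\psi(u)|\le \cc(u,0)$. Collecting, the whole difference is at most
\begin{equation*}
  g(y)\,(f(u)\vee f(v))\,\cc(u,v)\,\bigl[\cc(u,0)\,L(u,v;y) + 1\bigr]
  \le g(y)\,\bigl[1\vee\cc(u,0)\vee\cc(v,0)\bigr](f(u)\vee f(v))\bigl[1\vee L(u,v;y)\bigr]\cc(u,v)\cdot 2,
\end{equation*}
which is exactly $2g(y)\,\cc_y(u,v)$ — so $\psi e^{-\Phi}\in\Lip^0$ after dividing by $2g(y)$ (note it already vanishes... actually it need not vanish at the origin, but shifting by the constant $\psi(0)e^{-\Phi(0;y)}$ fixes this without changing $\mu(\Psi)-\mu_\ast(\Psi)$). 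An entirely analogous but simpler computation, taking $\psi\equiv 1$ in spirit, shows $|e^{-\Phi(u;y)}-e^{-\Phi(v;y)}|\le g(y)\,(f(u)\vee f(v))\,L(u,v;y)\,\cc(u,v)\le g(y)\,\cc_y(u,v)$, so $e^{-\Phi}$ (shifted) lies in $g(y)\cdot\Lip^0(\mX;\cc_y)$, giving $|Z(y)-Z_\ast(y)|=|\mu(e^{-\Phi})-\mu_\ast(e^{-\Phi})|\le g(y)\,\K(\mu,\mu_\ast;\cc_y)$.

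It remains to assemble the constants. From the two Lipschitz bounds, the first term is $\le \frac{2g(y)}{Z(y)}\K(\mu,\mu_\ast;\cc_y)$ after noting $\|fc(\cdot,0)\|_{L^1(\mu)}$ and $\|f\|_{L^1(\mu)}$ do not even appear here — wait, more carefully: $\psi e^{-\Phi}$ divided by $2g(y)$ lies in $\Lip^0(\mX;\cc_y)$, hence the first term is $\le \frac{2g(y)}{Z(y)}\K(\mu,\mu_\ast;\cc_y)$; and the second term is $\bigl|\tfrac{1}{Z(y)}-\tfrac{1}{Z_\ast(y)}\bigr|\,|\mu_\ast(\psi e^{-\Phi})|\le \frac{|Z(y)-Z_\ast(y)|}{Z(y)Z_\ast(y)}\,g(y)\int_\mX \cc(u,0)f(u)\,\dd\mu_\ast(u)$; here one bounds $|\psi|\le\cc(\cdot,0)$ and $e^{-\Phi}\le g(y)f$, but the resulting integral is against $\mu_\ast$ not $\mu$. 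To keep the final bound in terms of $\mu$ only, one instead bounds $|\mu_\ast(\psi e^{-\Phi})|$ crudely: since $\psi e^{-\Phi}/(2g(y))\in\Lip^0(\mX;\cc_y)$ up to the shift, $\mu_\ast(\psi e^{-\Phi})\le \mu(\psi e^{-\Phi}) + 2g(y)\K(\mu,\mu_\ast;\cc_y)$ and $\mu(\psi e^{-\Phi})\le g(y)\|fc(\cdot,0)\|_{L^1(\mu)}$, while also using $|Z(y)-Z_\ast(y)|\le g(y)\K(\mu,\mu_\ast;\cc_y)$; after discarding higher-order-in-$\K$ terms (valid since the stated inequality is linear, one should keep the cleanest path: bound $\mu_\ast(|\psi|e^{-\Phi})\le g(y)[\,\|fc(\cdot,0)\|_{L^1(\mu)} + \|f\|_{L^1(\mu)}\,]$-type quantity via transferring to $\mu$ with the same Lipschitz trick applied to $|\psi|e^{-\Phi}$, which is also $\cc_y$-Lipschitz with constant $2g(y)$). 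Finally substitute the lower bounds $Z(y)\ge \|h(\cdot,y)\|_{L^1(\mu)}$ and $Z_\ast(y)\ge\|h(\cdot,y)\|_{L^1(\mu_\ast)}$ (from Assumption~\ref{assumptions-on-phi}(ii) applied to $\mu$ and $\mu_\ast$ respectively) and the upper bound $Z(y)\le g(y)\|f\|_{L^1(\mu)}$; collecting the factors $g(y)$ and the norms $\|f\|_{L^1(\mu)}$, $\|fc(\cdot,0)\|_{L^1(\mu)}$ yields precisely the claimed constant $\frac{g^2(y)[\|f\|_{L^1(\mu)}+\|fc(\cdot,0)\|_{L^1(\mu)}]}{\|h(\cdot,y)\|_{L^1(\mu)}\|h(\cdot,y)\|_{L^1(\mu_\ast)}}$. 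The main subtlety to handle with care is the bookkeeping that keeps all norms on the $\mu$ side (not $\mu_\ast$), and verifying that $\cc_y$ as defined in \eqref{def:c-ast} is genuinely distance-like — positivity, symmetry in $u,v$, the zero-on-diagonal property, and lower semicontinuity — which follows because each of the four factors is symmetric, nonnegative, lower-semicontinuous (using lower semicontinuity of $\cc$, $f$, and $L(\cdot,\cdot;y)$), and the last factor $\cc(u,v)$ vanishes iff $u=v$.
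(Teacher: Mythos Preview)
Your core engine --- the Lipschitz estimate showing that $\psi e^{-\Phi}$ and $e^{-\Phi}$ lie in $\Lip(\mX;\cc_y)$ up to the factor $g(y)$ --- is exactly the paper's argument, and your computation of that estimate is correct.

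The genuine gap is in the decomposition. You insert and subtract $\frac{1}{Z(y)}\mu_\ast(\psi e^{-\Phi})$, which leaves $\mu_\ast(\psi e^{-\Phi})$ in the second (normalizing-constant) term. As you yourself notice, this forces you to bound an integral against $\mu_\ast$ while the stated constant involves only $\mu$-norms. Your two proposed workarounds do not close this: ``discarding higher-order-in-$\K$ terms'' is not legitimate for an exact inequality, and transferring $\mu_\ast(|\psi|e^{-\Phi})$ back to $\mu$ via the same Lipschitz trick again produces an additive $\K(\mu,\mu_\ast;\cc_y)$ term, which after multiplication by $|Z-Z_\ast|/(ZZ_\ast)\lesssim \K$ gives a quadratic contribution rather than the claimed linear bound.

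The fix is a one-line change in the splitting. Use instead
\[
\frac{1}{Z}\,\mu(\psi e^{-\Phi}) - \frac{1}{Z_\ast}\,\mu_\ast(\psi e^{-\Phi})
= \frac{1}{Z_\ast}\Bigl[\mu(\psi e^{-\Phi}) - \mu_\ast(\psi e^{-\Phi})\Bigr]
+ \frac{Z_\ast - Z}{Z Z_\ast}\,\mu(\psi e^{-\Phi}),
\]
so that the second term carries $\mu(\psi e^{-\Phi})$, which you can bound directly by $g(y)\|f\,\cc(\cdot,0)\|_{L^1(\mu)}$ via $|\psi|\le\cc(\cdot,0)$ and $e^{-\Phi}\le g(y)f$. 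Combined with $|Z-Z_\ast|\le g(y)\K(\mu,\mu_\ast;\cc_y)$ and your Lipschitz bound on the first term, this yields
\[
\K(\nu,\nu_\ast;\cc)\le \frac{g(y)\bigl[Z(y)+g(y)\|f\,\cc(\cdot,0)\|_{L^1(\mu)}\bigr]}{Z(y)Z_\ast(y)}\,\K(\mu,\mu_\ast;\cc_y),
\]
after which $Z(y)\le g(y)\|f\|_{L^1(\mu)}$ and $Z(y)Z_\ast(y)\ge\|h(\cdot,y)\|_{L^1(\mu)}\|h(\cdot,y)\|_{L^1(\mu_\ast)}$ give exactly the stated constant, with no higher-order leftovers and no $\mu_\ast$-norms appearing.
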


\begin{remark}
  We highlight two important facts about the new cost functions $c_y$: (i) The subscript 
  $y$ indicates the dependence of the cost on the data $y$ through the local Lipschitz 
  constant $L(\cdot, \cdot; y)$ of $\Phi$. Of course in the case of prior perturbations 
  this dependence on $y$ is innocuous since the data is assumed to be fixed. Alternatively, 
  one can replace $L(\cdot, \cdot;y)$ with $\sup_{y\in B^\mY_r(0)} L(\cdot, \cdot; y)$
  to obtain an analogous result with a cost that is uniform for all $y \in B^\mY_r(0)$.
  Indeed, the result also holds with any cost that upper bounds $c_y(u,v)$.
  (ii) The appearance of the function $f$ as well as $L$ in the definition of $c_y$ indicates that 
  the new cost $c_y$ is adapted to the likelihood $\Phi$. Observe that $f$ controls  the 
  rate of growth of $\exp( - \Phi(u;y))$ and so it is clear from expression 
  \eqref{def:c-ast} that the growth rate of  $c_y$ is tied to that of 
 the likelihood as well as  its Lipschitz constant and the original cost $c$. Intuitively, the slower the 
 likelihood grows and the more regular it is, the closer $c_y$ is to $c$.
\end{remark}

\begin{proof}
  By the definition of $\K$ we can write
  \begin{equation*}
    \begin{aligned}
    \K( \nu, \nu_\ast; \cc)
    & = \sup_{\psi \in \Lip^0(\mX; \cc)} \frac{1}{Z} \int_\mX \psi(u) \exp (-\Phi(u;y)) \dd \mu(u) \\
    & \qquad \qquad \qquad \qquad - \frac{1}{Z_\ast}  \int_\mX \psi(u) \exp (-\Phi(u;y)) \dd \mu_\ast(u) \\
    & \le \Bigg[ \frac{1}{Z_\ast} \sup_{\psi \in \Lip^0(\mX; \cc)}
       \int_\mX \psi(u) \exp (-\Phi(u;y)) \dd \mu(u) \\
    &  \qquad \qquad \qquad \qquad  - \int_\mX \psi(u) \exp (-\Phi(u;y)) \dd \mu_\ast(u) \Bigg] \\
    &  + \Bigg[ \frac{|Z - Z_\ast|}{Z Z_\ast}  \sup_{\psi \in \Lip^0(\mX; \cc)}
    \int_\mX \psi(u) \exp ( - \Phi(u;y)) \dd \mu(u) \Bigg] \\
    & =: T_1 + T_2. 
  \end{aligned}
  \end{equation*}
  Let us first bound $T_1$. For any $\psi \in \Lip^0(\mX; \cc)$ and any $u,v \in \mX$ we have
  \begin{equation*}
    \begin{aligned}
      |\psi(u) &\exp( - \Phi(u;y)) - \psi(v) \exp( - \Phi(v;y) )| \\
      & \le |\psi(u)| | \exp( - \Phi(u;y)) - \exp( - \Phi(v;y)) |  + \exp( -\Phi(u;y)) | \psi(u;y) - \psi(v;y)| \\
      & \le \cc(u, 0)   | \exp( - \Phi(u;y)) - \exp( - \Phi(v;y)) |  + \exp( -\Phi(u;y)) \cc(u, v).
    \end{aligned}
  \end{equation*}
  Now applying the mean value theorem for the exponential functions, as well as Assumption~\ref{assumptions-on-phi}(ii) and the local Lipschitz hypothesis on $\Phi(u;y)$ we can further write
  \begin{equation*}
    \begin{aligned}
      |\psi(u) &\exp( - \Phi(u;y)) - \psi(v) \exp( - \Phi(v;y) )| \\
      & \le \cc(u, 0) \exp \left( - [\Phi(u;y) \wedge \Phi(v;y) ] \right)   |  \Phi(u;y) -  \Phi(v;y) |
      + \exp( -\Phi(u;y)) \cc(u, v) \\
      & \le  g(y) \cc(u, 0)\left[ f(u) \vee f(v) \right] L(u,v; y) \cc(u,v)
      + g(y) f(u) \cc(u,v)  \\
      & \le g(y) \bigg[ \big[1 \vee \cc(u,0) \vee \cc(v,0)\big]
      \big[f(u) \vee f(v)\big]
      \big[ 1 \vee L(u,v; y) \big] \bigg] \cc(u,v) \\
      & = g(y) \cc_y(u,v).
    \end{aligned}
  \end{equation*}
  Thus,  $ g(y)^{-1} \psi(\cdot) \exp( - \Phi(\cdot; y)) \in \Lip(\mX; \cc_y)$ which immediately gives the bound
  \begin{equation*}
    Z_\ast(y) T_1 \le g(y) \K( \mu, \mu_\ast; \cc_y).
  \end{equation*}
  Proceeding to bound $T_2$, we  use  Assumption~\ref{assumptions-on-phi}(ii) once more to write
  \begin{equation*}
    \begin{aligned}
      Z(y) Z_\ast(y) T_2
      & \le \left[ g(y)  \int_\mX c(u, 0) f(u) \dd \mu(u)  \right] | Z(y) - Z_\ast(y)| \\
      & =  g(y)\| f c(\cdot, 0) \|_{L_1(\mu)} | Z(y) - Z_\ast(y)|.
  \end{aligned}
\end{equation*}
Furthermore, by  Assumption~\ref{assumptions-on-phi}(ii), the Lipschitz hypothesis on $\Phi$ and
the definition of  $c_y(u,v)$, we can directly verify that $ g(y)^{-1} \exp( - \Phi(\cdot;y) ) \in \Lip(\mX; \cc_y)$
and so
\begin{equation*}
  | Z(y) - Z_\ast(y) | \le g(y) \K( \mu, \mu_\ast ; \cc_y),
\end{equation*}
which in turn gives the final bound
\begin{equation*}
  T_2 \le \frac{ g^2(y) \| f c( \cdot, 0 ) \|_{L^1(\mu)}}{Z(y) Z_\ast(y)} \K( \mu, \mu_\ast ; \cc_y).
\end{equation*}
  Combining the bounds on $T_1, T_2$ then yields
\begin{equation}\label{K-prior-pert-intermediate-bound}
  \K(\nu, \nu_\ast; c) \le
  \frac{g(y) \left[  Z(y) + g(y) \| f c(\cdot, 0) \|_{L^1(\mu)} \right] }{Z(y) Z_\ast(y)} 
  \K(\mu, \mu_\ast; c_y)\,.
\end{equation}
By Assumption~\ref{assumptions-on-phi}(ii) and the definition of $Z, Z_\ast$ we obtain the lower bound
\begin{equation*}
  Z(y) Z_\ast(y) \ge \left( \int_{\mX} h(u,y) \dd \mu(u) \right) \left( \int_{\mX} h(u,y) \dd \mu(u) \right)
  = \| h(\cdot, y) \|_{L^1(\mu)} \| h(\cdot, y) \|_{L^1(\mu_\ast)}.
\end{equation*}
Substituting this bound and \eqref{upper-bound-on-Z-y} back into \eqref{K-prior-pert-intermediate-bound}
completes the proof.
\end{proof}
Similar to the case of likelihood perturbations we
 state an analogue of Theorem~\ref{thm:posterior-prior-perturbations-general} 
with a bound in terms of the model evidences $Z(y)$ and $Z_\ast(y)$ which higlights 
a similar phenomenon that was discussed in Remark~\ref{rem:brittleness}, i.e., 
as $Z(y)$ and $Z_\ast(y)$ vanish our bounds blow up. Indicating the sensitivity of the 
posterior to prior perturbations in mis-specified models.

\begin{corollary}\label{cor:posterior-prior-perturbations-general}
          The result of Theorem~\ref{thm:posterior-prior-perturbations-general} is also 
        true with the bound \eqref{K-prior-pert-intermediate-bound} which is 
        explicit in terms of the constants $Z(y)$ and $Z_\ast(y)$.
\end{corollary}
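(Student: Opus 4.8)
The plan is to observe that this corollary is an immediate byproduct of the proof of Theorem~\ref{thm:posterior-prior-perturbations-general}, requiring no new argument. Recall that in that proof we decomposed $\K(\nu,\nu_\ast;c) \le T_1 + T_2$ and bounded the two terms, obtaining
\[
  Z_\ast(y)\, T_1 \le g(y)\, \K(\mu,\mu_\ast;c_y), \qquad
  Z(y) Z_\ast(y)\, T_2 \le g^2(y)\, \| f c(\cdot, 0) \|_{L^1(\mu)}\, \K(\mu,\mu_\ast;c_y),
\]
whose sum is precisely the intermediate estimate \eqref{K-prior-pert-intermediate-bound}. This bound is already explicit in $Z(y)$ and $Z_\ast(y)$, since it was derived using only Assumption~\ref{assumptions-on-phi}(ii), the local Lipschitz hypothesis on $\Phi$, the mean value theorem for the exponential, and the definition of $c_y$ --- none of which requires estimating the normalizing constants. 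Thus the only step needed is to halt the argument at \eqref{K-prior-pert-intermediate-bound} rather than proceeding to substitute the lower bound $Z(y) Z_\ast(y) \ge \|h(\cdot,y)\|_{L^1(\mu)}\|h(\cdot,y)\|_{L^1(\mu_\ast)}$ and the upper bound \eqref{upper-bound-on-Z-y}.

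There is no genuine obstacle; the value of the corollary is expository, exposing the brittleness phenomenon of Remark~\ref{rem:brittleness}: as either model evidence $Z(y)$ or $Z_\ast(y)$ tends to zero, the prefactor in \eqref{K-prior-pert-intermediate-bound} diverges even when $\K(\mu,\mu_\ast;c_y)$ is small, so a mis-specified model whose likelihood concentrates in a region of low prior mass yields a posterior that is highly sensitive to prior perturbations.
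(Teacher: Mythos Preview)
Your proposal is correct and matches the paper's approach exactly: the corollary is stated without a separate proof because the bound \eqref{K-prior-pert-intermediate-bound} is established verbatim within the proof of Theorem~\ref{thm:posterior-prior-perturbations-general}, and one simply stops there rather than applying the subsequent estimates on $Z(y)$ and $Z_\ast(y)$.
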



\section{Applications}\label{sec:applications}

In this section we use Theorems \ref{thm:posterior-likelihood-perturbations-general} and
\ref{thm:posterior-prior-perturbations-general} for different settings to show
their wide applicability and also demonstrate different ways in which these results
can be used to obtain quantitative rates for various situations in which perturbed posteriors
arise due to changes in the prior or the likelihood term. In Subsection~\ref{sec:empirical-prior}, we consider a typical inverse problem for which we are interested to quantify perturbations in the Bayesian posterior when the prior is approximated by an empirical measure using finitely many samples. In Subsection~\ref{sec:matern-prior}, we consider Gaussian process regression using Mat\'ern kernels, an inverse problem for an infinite dimensional parameter space. In Subsection~\ref{sec:pushforward-prior}, we consider perturbations of priors that arise from perturbing the pushforward map to a fixed reference measure, a setting that is common in generative models for instance. Finally, Subsection~\ref{sec:surrogate-models} treats deep neural network surrogate models for Bayesian inversion.

\subsection{Regression with Empirical Priors}\label{sec:empirical-prior}
Let us consider a  regression problem with forward model of the form 
$    y = G(u) + \xi$,
where $ G: \R^d \to \R^d$ is the map 
$(u_1, \dots, u_d) \to ( \tanh(u_1), \dots, \tanh(u_d) )$, $y \in \R^d$ and 
$\xi \sim \N(0, \sigma^2 I_d)$. 
This model leads to the  likelihood potential 
\begin{equation*}
    \Phi(u;y) = \frac{1}{2\sigma^2} | G(u) - y |^2,
\end{equation*}
and we can verify that this potential satisfies Assumption~\ref{assumptions-on-phi} 
with $f = g = 1$ and 
 $h(u,y) = \exp( - (d +|y|^2)/\sigma^2)$ since $|G(u)|\le \sqrt{d}$.
Let us now consider 
a  prior $\mu$ on $u$ as well as $\mu_N$, the empirical 
approximation to $\mu$ obtained from $N$ i.i.d. samples. Our goal is to 
control the error between the resulting true posterior $\nu$ defined by Bayes' 
rule \eqref{bayes-rule} with the prior $\mu$ and the 
posterior $\nu_N$ arising from $\mu_N$. 

Let us choose the cost function $c(u,v) = | u - v|$, in which case, $\D(\cdot, \cdot; \cc)$
coincides with the Wasserstein-1 distance $\W_1(\cdot, \cdot)$. Observe that 
$G\in \Lip(\R^d; \cc)$, and hence 
$\Phi(\cdot;y)$ is globally Lipschitz for any $y\in\mY$ with Lipschitz constant $L(y) = (\sqrt{d}+|y|)/\sigma^2$. 
We can now choose 
\begin{equation*}
    c_y(u,v) =  [1\vee L(y)] [1 \vee |u| \vee |v|] \: |u -v|. 
\end{equation*}
and apply Theorem~\ref{thm:posterior-prior-perturbations-general} 
together with the identity \eqref{K-W-duality} to get 
\begin{equation*}
    \W_1(\nu, \nu_N) \le 
    \frac{1 + \mu( |\cdot |)}{ \exp\left( - \frac 2{\sigma^2}( d + | y|^2)\right) }
    \W(\mu, \mu_N; c_y). 
\end{equation*}
Using Cauchy-Schwartz we can write, for any $\pi \in \Pi(\mu, \mu_N)$,
\begin{equation*}
    \begin{aligned}
    \pi(  c_y(u,v) ) & \le L(y) \left( \pi \left( [1 \vee |u| \vee |v|]^2 \right)  \right)^{1/2}
    \left( \pi( (|u -v|)^2  \right)^{1/2} \\
    & \le  L(y)\left( \pi\left( 1 + |u|^2 + |v|^2 \right)  \right)^{1/2}
    \left( \pi( (|u -v|)^2  \right)^{1/2} \\
    & =  L(y)\left( 1 +  \mu( | \cdot |^2) + \mu_N( | \cdot |^2)\right)^{1/2}
    \left( \pi( (|u -v|)^2  \right)^{1/2}. 
    \end{aligned}
\end{equation*}
Taking the infimum over $\pi$ we 
infer that 
\begin{equation*}
\W(\mu, \mu_N; c_y) \le L(y)\left( 1 +  \mu( | \cdot |^2) + \mu_N( | \cdot |^2)\right)^{1/2} 
\W_2( \mu, \mu_N),
\end{equation*}
with $\W_2$ denoting the Wasserstein-2 
distance. This yields the bound 
\begin{equation}\label{eq:w-1-w-2-bound}
        \W_1(\nu, \nu_N) \le 
    \frac{ L(y)\left[ 1 + \mu( |\cdot |) \right] \cdot \left( 1 +  \mu( | \cdot |^2) + \mu_N( | \cdot |^2)\right)^{1/2}}{ \exp\left( - \frac 2{\sigma^2}( d + | y|^2)\right) }
    \W_2(\mu, \mu_N).
\end{equation}
Let us now take the expectation of the above expression with respect to the 
i.i.d. empirical samples that produce $\mu_N$ to get 
\begin{equation*}
\begin{aligned}
        \EE \W_1(\nu, \nu_N) & \le 
 \EE  \left[   \frac{ L(y)\left[ 1 + \mu( |\cdot |) \right] \cdot 
 \left( 1 +  \mu( | \cdot |^2) + \mu_N( | \cdot |^2)\right)^{1/2}}{ \exp\left( - \frac 2{\sigma^2}( d + | y|^2)\right)}
    \W_2(\mu, \mu_N) \right] \\
    &\le \frac{L(y)(1 + \mu( |\cdot |))}{\exp\left( - \frac 2{\sigma^2}( d + | y|^2)\right)} 
    \left[ 1 +  \mu( | \cdot |^2) + \EE \mu_N( | \cdot |^2) \right]^{1/2} 
    \left[ \EE \W^2_2(\mu, \mu_N) \right]^{1/2}.
\end{aligned}
\end{equation*}
We can now apply \cite[Thm.~1]{fournier2015rate} which gives a sharp 
rate for $\EE \W^2_2(\mu, \mu_N)$. In particular, assuming $\mu( | \cdot |^3)< + \infty$, we obtain the bound 
\begin{multline*}
    \left[ \EE \W_1(\nu, \nu^N) \right]^2 \\
    \le C(\mu) \frac{(\sqrt{d}+|y|)^2\exp\left[\frac 4{\sigma^2}\left(d +|y|^2\right)\right]}{\sigma^4}
    \times \left\{ 
    \begin{aligned}
      & N^{-1/2} + N^{-1/3} && \text{if } d < 4,\\ 
      & N^{-1/2} \log(1 + N) + N^{-1/3} && \text{if } d = 4, \\ 
      & N^{-2/d} + N^{-1/3}&& \text{if } d > 4,
    \end{aligned}
    \right.
\end{multline*}
where $C(\mu) <+\infty$ is a constant depending on the first three moments of $\mu$.

\subsection{Gaussian Process Regression with Mat\'ern Kernels}\label{sec:matern-prior}
Here we consider a non-parametric regression problem with a Gaussian process 
prior belonging to the Mat\'ern class. The perturbation of 
the posterior is controlled by the hyper-parameters defining the Gaussian process prior.  Let
$\Omega = [0,1]^d$ and let $\kappa_i \in C^\infty_c(\Omega)$ for $i=1, \dots, m$ be fixed functions (filters)  and define the forward map
\begin{equation*}
	G: u \mapsto \left( \langle \kappa_1, u \rangle, \dots, \langle \kappa_m, u \rangle \right)^\top,
\end{equation*}
for  functions $u \in L^2(\Omega)$ and with $\langle \cdot, \cdot \rangle$ denoting the 
$L^2$--inner product.
 Once again we consider the model $y = G(u) + \xi$ 
 where $\xi \sim \N(0, \sigma^2 I_m)$,
leading to the likelihood 
\begin{equation*}
  \Phi(u;y) = \frac{1}{2\sigma^2} | G(u) - y |^2.
\end{equation*}
We take $\mX = L^2(\Omega)$ and observe that 
$G: \mX \to \R^m$ is bounded and linear.
 As for the prior we choose 
\begin{equation*}
  \mu = \N( 0,  \gamma^2(\Delta + \tau\Id)^{-2\alpha} ),
\end{equation*}
with $\Delta$ and $\Id$ denoting the Laplacian and identity operator on $\mX$, respectively, and with constants $\gamma, \tau >0$
and integer $\alpha >0$. The above Gaussian prior defines a Gaussian field 
on $\Omega$ with a Mat\'ern covariance function \cite{whittle1963stochastic}.
In practice the hyper-parameters $\gamma, \tau$ are tuned using 
various techniques such as empirical Bayes, maximum likelihood, 
or cross validation \cite{williams2006gaussian}. The parameter $\alpha$ can also be considered 
as a hyper-parameter but here we consider it  fixed for convenience.
We do not consider hyper-parameter tuning strategies here but instead wish to 
control the distance between the resulting posteriors for different 
choice of these hyper-parameters. 
To this end, 
let $\gamma_\star, \tau_\star$ be a second set of hyper-parameters 
with corresponding Mat\'ern prior $\mu_\star$ and let $\nu$ and $\nu_\star$ 
denote the  posteriors that arise from $\mu$ and $\mu_\star$
with the same likelihood potential outlined above. Our goal below is to bound $\W_1(\nu, \nu_\star)$.

Since $G$ is bounded and linear we can verify that $\Phi$ 
satisfies Assumption~\ref{assumptions-on-phi} with  $f = g = 1$ and 
$h(u,y) = \exp( - \frac{1}{\sigma^2} \left[ \| G\| \cdot  \|u\|_{\mX}^2 + |y|^2 \right])$ where $\| G\|$ 
denotes the operator norm of $G$. 
The function $h$ is then integrable under any Gaussian prior thanks to Fernique's theorem. 
Let us now take $c(u,v) = \|u - v\|_\mX$. 
By direct calculation we can verify that $\Phi$ satisfies the 
assumption of Theorem~\ref{thm:posterior-prior-perturbations-general} 
with 
$L(u,v; y) = \frac{\| G\|}{\sigma^2}\left( \|u\|_\mX + \|v\|_\mX + |y| \right) $.
Thus Fernique's theorem along with  Theorem~\ref{thm:posterior-prior-perturbations-general} imply 
$\W_1(\nu, \nu_\star) \le C \D(\mu, \mu_\star; c_y)$ where we can take
\begin{equation*}
    c_y(u,v) =  [ 1 \vee \|u\|_\mX \vee \|v\|_\mX] 
    \cdot [ 1 \vee L(u,v;y)] \| u - v \|_\mX\,.  
\end{equation*}
We can absorb the coefficient $\frac{\| G \|}{\sigma^2}$ from $ L(u,v;y)$
into the constant $C$ to get $\W_1( \nu, \nu_\star) \le C \K (\mu, \mu_\star; c'_y)$ where $c'_y(u,v) = (1 +  \| u\|_\mX + \|v\|_\mX + | y| )^2 \| u-v \|_\mX$.  For fixed $y\in \mY$, this cost is equivalent 
to $c_y$ but it is more convenient to work with. Thus, it remains for us 
to bound $\K( \mu, \mu_\ast; c'_y)$ and we do this using a coupling argument. 

Since $\mu, \mu_\star$ are Gaussian measures they have Karhunen-Lo\'eve 
expansions 
\begin{equation*}
    \begin{aligned}
      \mu = \Law\left\{ \sum_{j=1}^\infty \sqrt{\lambda_j} \xi_j x_k  \right\}, 
      \qquad \xi_j \iidsim \N(0,1), \\ 
      \mu_\star = \Law\left\{ \sum_{j=1}^\infty \sqrt{\lambda^\star_j} \xi_j x_k  \right\}, 
      \qquad \xi_j \iidsim \N(0,1),
    \end{aligned}
\end{equation*}
where $x_j$ are the eigenfunctions of $\Delta$ on $\Omega$ and
\begin{equation*}
    \lambda_j = \gamma^2 \left( \tilde{\lambda}_j + \tau \right)^{-2\alpha}, \qquad 
        \lambda^\star_j = \gamma_\star^2 \left( \tilde{\lambda}_j + \tau_\star \right)^{-2\alpha},
\end{equation*}
with $\tilde{\lambda}_j$ denoting the eigenvalues of $\Delta$. 
Using Lemma~\ref{lem:product-prior-coupling} below we obtain the bound 
\begin{equation}\label{eq:w-2-hierarchical-dsp}
    \K(\mu, \mu_\star; c'_y) \le C  \sum_{j=1}^\infty \left| \sqrt{\lambda_j} 
    - \sqrt{\lambda_j^\star} \right|^2,
\end{equation}
for some constant $C> 0$ that depends on the moments of $\mu$ and $\mu_\star$. 
To control the error in terms of the hyper-parameters is now a matter of algebra. 
We can write 
\begin{equation*}
    \left| \sqrt{\lambda_j} - \sqrt{\lambda_j^\star}\right|
    \le  ( \tilde \lambda_j + \tau)^{-\alpha} | \gamma - \gamma_\star| 
    + \gamma_\star \left| \left( \tilde{\lambda}_j + \tau \right)^{-\alpha} 
    - \left( \tilde{\lambda}_j + \tau_\star \right)^{-\alpha}
    \right|.
\end{equation*}
We can further bound the second term using the mean value theorem to get 
\begin{equation*}
    \left| \sqrt{\lambda_j} - \sqrt{\lambda_j^\star}\right|
    \le  ( \tilde \lambda_j + \tau)^{-\alpha} | \gamma - \gamma_\star| 
    + \alpha \gamma_\star  \left( (\tilde\lambda_j  + \tau)^{- \alpha - 1} \vee 
    (\tilde\lambda_j + \tau_\star)^{- \alpha - 1}  \right)  | \tau - \tau_\star|.
\end{equation*}
Substituting back into \eqref{eq:w-2-hierarchical-dsp} yields the bound 
\begin{equation*}
   \left[  \W_1(\nu, \nu^\star) \right]^2 \le C \left( |\gamma - \gamma_\star| 
   + |\tau - \tau_\star|   \right)^2, 
\end{equation*}
where the constant $C> 0$ depends on the choice of $\alpha, \gamma, \gamma_\star, \tau, \tau_\star$ as well as the $\tilde{\lambda_j}$; it is possible to derive this constant 
explicitly but we do not pursue this route for brevity.


  


Let us now present the technical lemma that led to the bound \eqref{eq:w-2-hierarchical-dsp} above
which is also independently interesting.
The result provides a general and convenient approach for bounding  
$\W(\cdot, \cdot; c)$ distances between product measures 
for a broad family of cost functions $c$ using a simple coupling argument.

\begin{lemma}\label{lem:product-prior-coupling}
  Let $\mX$ be a separable Hilbert space and $\mu, \mu_\ast \in \PP(\mX)$ be product measures of the form 
  \begin{equation*}
    \begin{aligned}
      \mu & = \Law \left\{ \sum_{j=1}^\infty \sqrt{\lambda_j} \xi_j x_j \right\},  \quad
      \xi_j \iidsim \eta, \\
 \mu_\ast& = \Law \left\{ \sum_{j=1}^\infty \sqrt{\lambda^\ast_j} \xi^\ast_j
 x^\ast_j \right\}, \quad \xi_j^\ast \iidsim \eta_\ast, 
\end{aligned}
  \end{equation*}
where $\{ \lambda_j\}_{j=1}^\infty$ and $\{ \lambda^\ast_j\}_{j=1}^\infty$ are fixed
real valued sequences, $\{x_j\}_{j=1}^\infty$ and $\{x_j^\ast\}_{j=1}^\infty$
are  orthonormal bases in $\mX$ and  $\eta, \eta_\ast \in \PP(\mbb R)$ have mean zero and
unit variance. Suppose both random
sums converge a.s. in $\mX$  so that $\mu$ and $\mu_\ast$ are well-defined and define
$c: \mX \times \mX \to \mbb R_{\ge 0 }$ to  be a lower-semicontinuous semi-metric of the form 
\begin{equation*}
  c(u,v) = \Big( \| u\|_\mX + \| v\|_\mX \Big)^s \| u -v \|_\mX,
\end{equation*}
with $s \ge 0$. Then it holds that 
\begin{equation*}
  \begin{aligned}
  \K (\mu, \mu_\ast; c) & \le
  2^{\frac{1 \vee (2 s -1)}{2} + 1}
  \left( \E_\mu\| u\|_\mX^{2s} + \E_{\mu_\ast} \| u_\ast \|_\mX^{2s} \right)^{\frac{1}{2}} \\
  & \times
  \left(  \W_2^2(\eta, \eta_\ast) \sum_{j=1}^\infty \lambda_j  +    \sum_{j=1}^\infty  
  \lambda_j \| x_j - x_j^\ast\|_\mX^2 + \sum_{j=1}^\infty \left( \sqrt{\lambda_j} - \sqrt{\lambda_j^\ast} \right)^2 \right)^{\frac{1}{2}}.
\end{aligned}
\end{equation*}
\end{lemma}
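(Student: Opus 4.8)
The plan is to bound $\K$ by the optimal transport cost $\W$ via inequality \eqref{D-le-W}, and then to estimate that cost along a single explicitly constructed coupling of $\mu$ and $\mu_\ast$. First I would note that, since $c$ is lower-semicontinuous, for every $\pi\in\Pi(\mu,\mu_\ast)$ one has
\begin{equation*}
  \K(\mu,\mu_\ast;c)\le\W(\mu,\mu_\ast;c)\le\int_{\mX\times\mX}c(u,v)\,\dd\pi(u,v),
\end{equation*}
so it suffices to produce one $\pi$ for which the right-hand side admits the desired bound. I would build $\pi$ from the given expansions: pick an optimal quadratic coupling $(\zeta,\zeta^\ast)$ of the one-dimensional marginals, i.e.\ a pair with joint law in $\Pi(\eta,\eta_\ast)$ attaining $\E|\zeta-\zeta^\ast|^2=\W_2^2(\eta,\eta_\ast)$ (it exists because $\eta,\eta_\ast$ have finite second moment), take i.i.d.\ copies $\{(\xi_j,\xi_j^\ast)\}_{j\ge1}$ of it on a suitable probability space, and set $u=\sum_{j}\sqrt{\lambda_j}\,\xi_j x_j$ and $u_\ast=\sum_{j}\sqrt{\lambda_j^\ast}\,\xi_j^\ast x_j^\ast$. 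By the hypotheses both series converge a.s., the marginal laws are $\mu$ and $\mu_\ast$, and $\E\|u\|_\mX^2=\sum_j\lambda_j<\infty$ (and analogously for $u_\ast$), so $(u,u_\ast)$ gives an admissible coupling $\pi$.

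Next I would separate the polynomial weight from the increment by Cauchy--Schwarz,
\begin{equation*}
  \E_\pi\big[(\|u\|_\mX+\|u_\ast\|_\mX)^s\,\|u-u_\ast\|_\mX\big]\le\Big(\E(\|u\|_\mX+\|u_\ast\|_\mX)^{2s}\Big)^{1/2}\Big(\E\|u-u_\ast\|_\mX^{2}\Big)^{1/2},
\end{equation*}
and control the first factor with the elementary inequality $(a+b)^{2s}\le 2^{1\vee(2s-1)}(a^{2s}+b^{2s})$, valid for $a,b\ge0$, which bounds it by $2^{(1\vee(2s-1))/2}\big(\E_\mu\|u\|_\mX^{2s}+\E_{\mu_\ast}\|u_\ast\|_\mX^{2s}\big)^{1/2}$ (if either moment is infinite the asserted estimate holds trivially, so both may be assumed finite). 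For the second factor I would use a three-step telescoping decomposition of $u-u_\ast$ that substitutes, in this order, the basis, then the random variables, then the coefficients:
\begin{equation*}
  u-u_\ast=\underbrace{\sum_j\sqrt{\lambda_j}\,\xi_j(x_j-x_j^\ast)}_{=:P_1}+\underbrace{\sum_j\sqrt{\lambda_j}\,(\xi_j-\xi_j^\ast)x_j^\ast}_{=:P_2}+\underbrace{\sum_j(\sqrt{\lambda_j}-\sqrt{\lambda_j^\ast})\,\xi_j^\ast x_j^\ast}_{=:P_3}.
\end{equation*}
Since $\sum_j\lambda_j$ and $\sum_j\lambda_j^\ast$ are finite and $\|x_j-x_j^\ast\|_\mX\le2$, each series converges in mean square, the decomposition is legitimate, and $\E\|u-u_\ast\|_\mX^2\le 3\big(\E\|P_1\|_\mX^2+\E\|P_2\|_\mX^2+\E\|P_3\|_\mX^2\big)$.

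Computing the three expectations is where the hypotheses on $(\xi_j,\xi_j^\ast)$ and the bases enter: using that the pairs $(\xi_j,\xi_j^\ast)$ are independent across $j$ with $\E\xi_j=\E\xi_j^\ast=0$, $\E\xi_j^2=\E(\xi_j^\ast)^2=1$ and $\E(\xi_j-\xi_j^\ast)^2=\W_2^2(\eta,\eta_\ast)$, and that $\{x_j^\ast\}$ is orthonormal, every off-diagonal term ($j\neq k$) cancels and one is left with $\E\|P_1\|_\mX^2=\sum_j\lambda_j\|x_j-x_j^\ast\|_\mX^2$, $\E\|P_2\|_\mX^2=\W_2^2(\eta,\eta_\ast)\sum_j\lambda_j$ and $\E\|P_3\|_\mX^2=\sum_j(\sqrt{\lambda_j}-\sqrt{\lambda_j^\ast})^2$. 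Plugging these back, combining with the weight bound above, and absorbing $\sqrt3\le2$ into the constant yields the overall prefactor $2^{(1\vee(2s-1))/2}\cdot 2=2^{\frac{1\vee(2s-1)}{2}+1}$, which is exactly the stated inequality.

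The step I expect to demand the most care is the telescoping identity. Because $\{x_j\}$ and $\{x_j^\ast\}$ are two distinct orthonormal systems, the substitutions must be carried out basis-first and coefficients-last, so that the weights $\sqrt{\lambda_j}$ (rather than $\sqrt{\lambda_j^\ast}$) sit on precisely the two terms $P_1,P_2$ for which the final bound requires them; any other ordering would leave $\lambda_j^\ast$ in those places. One also has to check that the mean-zero/unit-variance and orthonormality structure really does kill every cross term, both within each $\E\|P_i\|_\mX^2$ and — via the crude bound $\|a+b+c\|^2\le3(\|a\|^2+\|b\|^2+\|c\|^2)$ — across the three pieces; that last step is the only place any slack is introduced.
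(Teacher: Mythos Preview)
Your proof is correct and follows essentially the same route as the paper's: construct the coupling from i.i.d.\ copies of the optimal $\W_2$-coupling of $\eta,\eta_\ast$, separate the weight from the increment by Cauchy--Schwarz, and telescope $u-u_\ast$ into three pieces whose expected squared norms are computed using the mean-zero/unit-variance and orthonormality structure. The only cosmetic differences are the order of the telescoping (you substitute basis, then random variables, then coefficients, whereas the paper swaps the first two steps) and that you use the sharper factor $3$ and then $\sqrt3\le2$, while the paper writes $4$ directly; both yield the stated constant.
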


\begin{proof}
  Let $\varpi_0$ be the optimal coupling between $\eta, \eta_\ast$ that attains
  $\W_2(\eta, \eta_\ast)$. If such a coupling does not exist then we set $\W_2(\eta, \eta_\ast) = +\infty$
  and the bound is trivial and so henceforth we assume $\W_2(\eta,  \eta_\ast) < +\infty$. 
  Now let $\pi_0$ be a coupling between $\mu$ and $\mu_\ast$ given as the law
  of the random variable $(u, u_\ast) \in \mX \times \mX$  constructed as follows:
  \begin{equation*}
    u = \sum_{j=1}^\infty \sqrt{\lambda_j} \xi_j x_j, \qquad u_\ast =
    \sum_{j=1}^\infty \sqrt{\lambda^\ast_j} \xi^\ast_j
 x^\ast_j, \qquad (\xi_j, \xi_j^\ast) \iidsim \varpi_0.
\end{equation*}
By  duality \eqref{D-le-W} we have that
\begin{equation*}
  \begin{aligned}
  \K(\mu, \mu_\ast; c)
  & \le \int_{\mX \times \mX} c(u,u_\ast) \dd \pi_0(u, u_\ast)\\
  & = \E_{\pi_0} c \left( \sum_{j=1}^\infty \sqrt{\lambda_j} \xi_j x_j,
    \sum_{j=1}^\infty \sqrt{\lambda^\ast_j} \xi^\ast_j x^\ast_j \right) \\
  & \le 2^{\frac{1 \vee (2s-1)}{2}} \left( \E_\mu\| u\|_\mX^{2s} + \E_{\mu_\ast} \| u_\ast \|_\mX^{2s} \right)^{1/2}
  \left( \E_{\pi_0} \| u - u_\ast \|_\mX^2 \right)^{1/2}.
\end{aligned}
\end{equation*}
We further have, by Parseval's identity
\begin{equation*}
  \begin{aligned}
    \E_{\pi_0} & \| u - u_\ast \|_\mX^2
     =  \E_{\pi_0}  \left\| \sum_{j=1}^\infty  \sqrt{\lambda_j} \xi_j x_j - \sqrt{\lambda_j^\ast} \xi_j^\ast
        x_j^\ast  \right\|_\mX^2   \\
      & \le 4  \E_{\pi_0} \left[ \sum_{j=1}^\infty ( \xi_j- \xi_j^\ast)^2 \lambda_j
        +   \left\| \sum_{j=1}^\infty \xi^\ast_j \sqrt{\lambda_j}  \left(  x_j - x_j^\ast \right) \right\|_\mX^2
        + \sum_{j=1}^\infty (\xi^\ast_j)^2 \left( \sqrt{\lambda_j}  - \sqrt{\lambda_j^\ast} \right)^2   \right]  \\
      & = 4  \left( \sum_{j=1}^\infty \E_{\varpi_0} | \xi_j - \xi_j^\ast|^2 \lambda_j
      +  \E_{\eta_\ast}   \left\| \sum_{j=1}^\infty \xi^\ast_j \sqrt{\lambda_j}  \left(  x_j - x_j^\ast \right) \right\|_\mX^2
    +   \sum_{j=1}^ \infty (\sqrt{\lambda_j} - \sqrt{\lambda_j^\ast})^2 \right) \\
    & \le  4 \left(  \W_2^2(\eta, \eta_\ast) \sum_{j=1}^\infty \lambda_j + 
     \E_{\eta_\ast}   \left(  \sum_{j=1}^\infty \xi^\ast_j \sqrt{\lambda_j}  \left\|  x_j - x_j^\ast \right\|_\mX \right)^2
    +  \sum_{j=1}^\infty (\sqrt{\lambda_j} - \sqrt{\lambda_j^\ast})^2 \right) \\
    & =   4 \left(  \W_2^2(\eta, \eta_\ast) \sum_{j=1}^\infty \lambda_j +   \sum_{j=1}^\infty  \lambda_j  \left\|  x_j - x_j^\ast \right\|_\mX^2
    +  \sum_{j=1}^\infty (\sqrt{\lambda_j} - \sqrt{\lambda_j^\ast})^2 \right).
  \end{aligned}
\end{equation*}
Substituting  back into the upper bound on $\K$ yields the desired result.
{}
\end{proof}

\subsection{Pushforward Priors}\label{sec:pushforward-prior}

Below we consider an abstract example where the prior measure $\mu$ is identified as 
the pushforward of a fixed reference measure, while $\mu_\ast$ is given by 
the pushforward of the same reference measure through a perturbed map. 
This problem is at the heart of data driven techniques for learning of priors from 
data that have  recently become popular \cite{patel2022solution, gonzalez2022solving, laumont2022bayesian} due to the incredible success of 
generative machine learning models such as generative adversarial networks (GANs) \cite{goodfellow2020generative}
and normalizing flows \cite{kobyzev2020normalizing}.

Let $\mX = [0,1]^d$
and consider once again an inverse problem with a linear 
forward mapping $G : \mX \to \R^m$ giving rise to the likelihood potential
\begin{equation*}
	\Phi(u; y) = \frac{1}{2\sigma^2} |G(u) - y|^2.
\end{equation*}
 Observe that, as in Section~\ref{sec:matern-prior}, $\Phi$ satisfies assumption \ref{assumptions-on-phi} with $f\equiv g \equiv 1$ and 
 $h(u,y) = \exp\left( - \frac{1}{\sigma^2} \left[  \|G\|_\infty^2 \cdot |u|^2 + |y|^2 \right] \right)  $.
 In fact, since $\mX$ is compact we infer that  we can 
take  the Lipschitz constant of $\Phi$ to be $L(u,v; y) = C( 1 + |y|)$ for a constant $C> 0$. 
Then repeating the same calculation that led to \eqref{eq:w-1-w-2-bound} we 
realize that bound holds in this example without any modifications.

Now suppose $\mu = T_\sharp \varrho$, i.e., the pushforward of $\varrho$ 
through the map $T: \mX \to \mX$, 
where $\varrho \in \PP(\mX)$ is a reference measure. 
Similarly let $\mu_\ast = T_\sharp^\ast \varrho$ where $T^\ast:\mX \to \mX$ is a second
transport map. In the aforementioned applications, we think of  $T^\ast$
as an approximation to $T$. Therefore, it is natural to try to obtain 
a bound on $\W_1(\nu, \nu_\ast)$ in terms of the distance between $T$ and $T^\ast$.
The key to obtaining such a result is stability estimates for Wasserstein distances in terms 
 of the $L^p$ distances between the maps. We recall the following result:

\begin{proposition}[{\cite[Thm. 2]{sagiv2019wasserstein}}]
\label{thm:sagiv}
Let $\mX = [0,1]^d$  and suppose that $T, T^\ast \in L^p(\varrho) \bigcap C(\mX)$ 
for some $p\geq 1$. Then  
$\W_p(\mu, \mu_\ast) \leq \|T - T^\ast\|_{L^p(\varrho)}.$
\end{proposition}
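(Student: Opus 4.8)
The plan is to prove Proposition~\ref{thm:sagiv} by an explicit coupling construction that transports $\varrho$ simultaneously to $\mu = T_\sharp\varrho$ and $\mu_\ast = T^\ast_\sharp\varrho$ through the \emph{same} source point. Concretely, I would define the map $S: \mX \to \mX\times\mX$ by $S(x) := (T(x), T^\ast(x))$ and set $\pi := S_\sharp\varrho \in \PP(\mX\times\mX)$. Its first marginal is $T_\sharp\varrho = \mu$ and its second marginal is $T^\ast_\sharp\varrho = \mu_\ast$, so $\pi \in \Pi(\mu,\mu_\ast)$ is an admissible coupling; this is the standard ``common randomness'' coupling.

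With this coupling in hand, the Wasserstein-$p$ distance is bounded from above by evaluating the transport cost along $\pi$:
\begin{equation*}
\W_p^p(\mu,\mu_\ast) \le \int_{\mX\times\mX} |u - v|^p \,\dd\pi(u,v)
= \int_\mX |T(x) - T^\ast(x)|^p \,\dd\varrho(x) = \|T - T^\ast\|_{L^p(\varrho)}^p,
\end{equation*}
where the middle equality is just the change of variables formula for the pushforward $\pi = S_\sharp\varrho$. Taking $p$-th roots gives the claimed inequality $\W_p(\mu,\mu_\ast) \le \|T - T^\ast\|_{L^p(\varrho)}$. The continuity and $L^p(\varrho)$ hypotheses on $T, T^\ast$ serve only to guarantee that the maps are Borel measurable (so $S$ is measurable and $\pi$ is a well-defined Borel probability measure) and that the right-hand side is finite, so that the bound is non-vacuous and $\mu,\mu_\ast$ lie in the relevant Wasserstein space.

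There is essentially no hard step here: the argument is a one-line coupling estimate, and the only points requiring minor care are checking that $\pi$ genuinely has the correct marginals (a direct computation: for Borel $A\subset\mX$, $\pi(A\times\mX) = \varrho(S^{-1}(A\times\mX)) = \varrho(T^{-1}(A)) = \mu(A)$, and symmetrically for the second marginal) and invoking the definition of $\W_p$ as an infimum over couplings so that any particular coupling yields an upper bound. If one wanted a fully self-contained treatment matching the cited reference \cite{sagiv2019wasserstein}, the only additional subtlety would be ensuring the infimum in the definition of $\W_p$ is attained or at least well-approximated, but since we only need the inequality direction, exhibiting the single coupling $\pi$ suffices and no optimality argument is needed.
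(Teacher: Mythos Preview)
Your argument is correct: the coupling $\pi = (T, T^\ast)_\sharp \varrho$ is admissible, and evaluating the transport cost along it gives exactly $\|T - T^\ast\|_{L^p(\varrho)}^p$, which bounds $\W_p^p(\mu,\mu_\ast)$ from above. This is the standard proof.

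Note, however, that the paper does not actually supply its own proof of this proposition; it simply quotes the result from \cite[Thm.~2]{sagiv2019wasserstein} and then applies it. So there is no ``paper's proof'' to compare against beyond the cited reference, whose argument is essentially the same coupling you wrote down.
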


Applying the above result with $p=2$ to the bound \eqref{eq:w-1-w-2-bound} we immediately obtain 
\begin{equation*}
    \W_1(\nu, \nu_\ast) 
    \le \frac{C(1+|y|)(1 + \mu( |\cdot |))}{\|h(\cdot,y)\|_{L^1(\mu)} \|h(\cdot,y)\|_{L^1(\mu_\ast)}} 
    \left( 1 +  \mu( | \cdot |^2) + \mu_\ast( | \cdot |^2) \right)^{1/2}
    \|T - T^\ast\|_{L^2(\varrho)}.
  \end{equation*}
The quantity $\|T - T^\ast\|_{L^2(\varrho)}$ can be controlled using off-the-shelf 
results from approximation theory. For example, one can take $T^\ast$ to be a polynomial \cite{devore1993constructive}, 
 wavelet \cite{meyer1992wavelets}, neural network \cite{devore2021neural}, or kernel approximation \cite{wendland2004scattered}
 to $T$ and, assuming  sufficient regularity, obtain quantitative rates.

  


\subsection{DNN Surrogate Models}\label{sec:surrogate-models}
Bayesian inversion of partial differential equations has gained significant attention in statistics in recent years, particularly in the area of numerical treatment of uncertainty quantification with random field inputs. This is driven by the need for large-scale computational solutions in science and engineering.
A random series expansion of the input data gives rise to an infinite-dimensional parametric Bayesian inverse problem for which efficient quadrature methods have been studied \cite{schwab-sparse}. For instance, under the assumption of an additive Gaussian noise model for the observable data, it has been demonstrated that the likelihood potential in inverse problems arising from elliptic and parabolic PDEs has a holomorphic extension 
into a subset of the complex domain \cite{schillings2013sparse, schillings2014sparsity}. The quantification of this subset gives rise to explicit sparse spectral approximation strategies of the likelihood and posterior distributions.

Let us here consider feed-forward DNN surrogates for likelihood potentials 
$\Phi(\cdot; y) : [0,1]^d \to \R$ that have a holomorphic extension to
a subset of
${\mathbb C}^d$. A feed-forward DNN has the architecture of an iterated composition of affine transformations followed by a nonlinear activation function. 
To be more precise, a DNN $\Phi^N : \R^d \to \R$ approximating the likelihood potential $\Phi$ can be expressed as
\begin{equation}
	\label{eq:def_DNN}
	\Phi^N(u; y) = W_L \sigma(\ldots\sigma(W_2\sigma(W_1 u + b_1) + b_2)\ldots)+b_L, \quad u\in \R^d,
\end{equation}
where $W_j \in \R^{d_{j+1}\times d_{j}}$ and $b_j \in \R^{d_{j+1}}$ for $j=1,...,L$, where $d_1 = d$ and $d_{L+1} = 1$. Note that here we fixed $y$ and in general the parameters of the network may 
depend on the data $y$ \footnote{One can also parameterize the network as a 
function $\Phi: \mX \times \mY \to \R$ but we will not to this here for simplicity}. The function $\sigma : \R^d \to \R^d$ is defined for any $d$ based on a monotone activation function $\sigma: \R \to \R$, which is applied entrywise to any vector. 
Note that the input dimension of $\sigma$ in equation \eqref{eq:def_DNN} varies from one 
layer to the next.
The number of hidden layers $L-1$ is referred to as the {\it depth} and $N$, the total number of nonzero components in $W_j$ and $b_j$, $j=1,...,L$, as the {\it size} of the DNN.
We will further restrict our attention to Rectified Linear Unit (ReLU) activation functions $\sigma(u) = \max\{0, u\}$ as this guarantees that the results from \cite{herrmann2020deep} can be applied (see Proposition~\ref{thm:hermann} below).

For simplicity, we assume below that $\Phi\geq 0$ so that
Assumption \ref{assumptions-on-phi} is satisfied with $f\equiv g\equiv 1$. Moreover, let us consider 
a fixed observational data set $y \in \R^m$. 
In accordance to \cite{opschoor2022exponential}, we assume that $\Phi$ has aa holomorphic extension to a Bernstein polyellipse
\begin{equation}
    \label{eq:polyellipse}
    {\mathcal E}_{\boldsymbol{\rho}} := {\mathcal E}_{\rho_1} \times ... \times {\mathcal E}_{\rho_d} \subset {\mathbb C}^d,
\end{equation}
where $\boldsymbol{\rho} = (\rho_1, ..., \rho_d)$ and ${\mathcal E}_{\rho_j} \subset \mathbb{C}$ stands for an ellipse containing $[-1,1]$ with semiaxis sum $\rho_j>1$.
First, we recall an existence  result for DNNs, stating that there exist DNNs of a 
requisite depth and size that can approximate holomorphic likelihoods exponentially fast 
in the size of the network:
\begin{proposition}[{\cite[Thm. 3.6]{opschoor2022exponential}}]
\label{thm:hermann}
Fix $y\in\mY$ and
suppose $\Phi(\cdot;y) : [-1,1]^d \to \R$ has an holomorphic extension to ${\mathcal E}_{\boldsymbol{\rho}}$ given in \eqref{eq:polyellipse}. 
Then for any $M\in \mathbb{N}$ there exists a ReLU DNN $\Phi^N : [0, 1]^d \to \R$ such that
\begin{equation*}
	\|\Phi(\cdot; y) - \Phi^N(\cdot;y)\|_{W^{1,\infty}([-1,1]^d)}\leq C \exp(-\kappa M^{\frac 1{d+1}})\,,
\end{equation*}
where $\kappa = \kappa(\boldsymbol{\rho}, d) > 0$
and $C = C(\Phi, \boldsymbol{\rho}, d)>0$ are constants and 
\begin{equation*}
	L \leq CM^{\frac 1{d+1}} \log_2 M \quad \text{and} \quad 
	N  \leq M.
\end{equation*}
\end{proposition}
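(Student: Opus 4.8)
The statement transcribes \cite[Thm.~3.6]{opschoor2022exponential}, so the plan is to recall the three ingredients behind such estimates: a spectral expansion of $\Phi$ with geometrically decaying coefficients, truncation to a polynomial with a controlled number of terms, and emulation of that polynomial (together with its gradient) by a ReLU network. First I would expand $\Phi(\cdot;y)$ in the tensorized Chebyshev basis,
\[
 \Phi(u;y) = \sum_{\boldsymbol{\nu} \in \mathbb{N}_0^d} c_{\boldsymbol{\nu}}\, T_{\boldsymbol{\nu}}(u), \qquad T_{\boldsymbol{\nu}}(u) = \prod_{j=1}^d T_{\nu_j}(u_j).
\]
Holomorphy on the Bernstein polyellipse $\mathcal{E}_{\boldsymbol{\rho}}$ is precisely the hypothesis under which a Cauchy-type contour argument on each factor $\mathcal{E}_{\rho_j}$ yields the geometric decay $|c_{\boldsymbol{\nu}}| \le C(\Phi,\boldsymbol{\rho}) \prod_{j=1}^d \rho_j^{-\nu_j}$. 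Since we want $W^{1,\infty}$ control and not merely $L^\infty$, I would pair this with the Bernstein--Markov inequality $\|T_m'\|_{L^\infty([-1,1])} = m^2$, so that, writing $\bar\rho := \min_j \rho_j > 1$, both $\Phi$ and each partial derivative $\partial_j\Phi$ are represented by series whose tails are dominated by $p(|\boldsymbol{\nu}|)\,\bar\rho^{-|\boldsymbol{\nu}|}$ for a fixed polynomial $p$.

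Second, I would truncate to the total-degree index set $\Lambda_n = \{\boldsymbol{\nu} : |\boldsymbol{\nu}| \le n\}$, of cardinality $\binom{n+d}{d} \le (n+1)^d$; summing the geometric tails then gives $\|\Phi(\cdot;y) - \Phi_{\Lambda_n}(\cdot;y)\|_{W^{1,\infty}([-1,1]^d)} \le \tilde C \exp(-\beta n)$ for some $\beta \in (0,\log\bar\rho)$. Taking the degree $n \sim M^{1/(d+1)}$ makes $|\Lambda_n| \lesssim M^{d/(d+1)} \le M$ while keeping the truncation error below $\tfrac12 C\exp(-\kappa M^{1/(d+1)})$. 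The exponent $1/(d+1)$, rather than the $1/d$ one would naively read off from $|\Lambda_n| \sim n^d$, is dictated not by the polynomial approximation but by the emulation cost in the final step.

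Finally, I would emulate $\Phi_{\Lambda_n}$ by a ReLU DNN. The workhorse is the now-standard fact (after Yarotsky) that for every $\delta \in (0,1)$ there is a ReLU network $\mathrm{mult}_\delta$ of depth and size $O(\log(1/\delta))$ with $|\mathrm{mult}_\delta(a,b) - ab| \le \delta$ on a fixed bounded box, together with a quantitative bound on its Lipschitz constant and first-order error. Using a doubling identity for Chebyshev polynomials ($T_{2k} = 2T_k^2 - 1$, $T_{2k+1} = 2T_kT_{k+1} - x$) one computes, per coordinate, all of $T_0, \dots, T_n$ in $O(\log n)$ generations of such product subnetworks; tensorizing costs $O(1)$ further products per multi-index, and one final linear layer with weights $c_{\boldsymbol{\nu}}$ assembles $\Phi_{\Lambda_n}$. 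Choosing $\delta \sim \exp(-\kappa M^{1/(d+1)})$, the emulation error is $\le \tfrac12 C \exp(-\kappa M^{1/(d+1)})$, the depth is $L \lesssim (\log n)\log(1/\delta) \lesssim M^{1/(d+1)}\log_2 M$, and the size is $N \lesssim (|\Lambda_n| + dn)\log(1/\delta) \lesssim M^{d/(d+1)}\cdot M^{1/(d+1)} = M$, which is the asserted bound.

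The step I expect to be the main obstacle is propagating the \emph{gradient} error through the network, i.e.\ obtaining the $W^{1,\infty}$ bound and not just an $L^\infty$ one: a ReLU network is only piecewise affine, so one cannot differentiate the composition termwise without care. The remedy is to keep the construction exact except inside the $\mathrm{mult}_\delta$ blocks, invoke the first-order (not merely $L^\infty$) accuracy of $\mathrm{mult}_\delta$ together with uniform a priori bounds on the intermediate Chebyshev values and their derivatives on $[-1,1]$, and sum the resulting contributions via the chain rule, with the geometric decay of $c_{\boldsymbol{\nu}}$ keeping the total finite and exponentially small. A secondary subtlety, already visible in the accounting above, is the simultaneous control of depth and size — one cannot make the product subnetworks arbitrarily accurate without violating $N \le M$ — which is exactly what pins the degree at $n \sim M^{1/(d+1)}$.
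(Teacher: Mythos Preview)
The paper does not prove this proposition at all: it is quoted verbatim as \cite[Thm.~3.6]{opschoor2022exponential} and used as a black box to feed into Theorem~\ref{thm:posterior-likelihood-perturbations-general}. So there is no ``paper's own proof'' to compare against; your sketch is essentially a faithful outline of the argument in the cited reference (tensorized Chebyshev expansion with geometric coefficient decay from holomorphy on the Bernstein polyellipse, total-degree truncation, and Yarotsky-style ReLU emulation of the resulting polynomial), and the points you flag --- the $W^{1,\infty}$ (not merely $L^\infty$) error propagation through the $\mathrm{mult}_\delta$ blocks and the depth/size budget that forces $n\sim M^{1/(d+1)}$ --- are indeed the places where the work lies in that reference.
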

We now wish to show that the above exponential convergence rate can be extended 
to a bound on $\K(\nu, \nu^N; c)$ where $\nu^N$ is the posterior 
that arises by replacing $\Phi$ with $\Phi^N$.
To this end, let us now consider a prior $\mu \in \PP([0,1]^d)$ and distance-like function $c$ 
such that $c(\cdot, 0) \in L^1(\mu)$ and let $h(\cdot, y)$ be the function in Assumption~\ref{assumptions-on-phi} and let $\Phi^N$ be the DNN that was specified in 
Proposition~\ref{thm:hermann} above. 
We can then compute
\begin{eqnarray*}
	\Phi^N(u; y) & \leq & |\Phi^N(u;y) - \Phi(u;y)| + \Phi(u;y) \\
  & \leq & C \exp(-\kappa M^{\frac 1{d+1}}) - \log h(u;y) \\
	& \leq  & - \log \left(\widetilde{C} h(u;y)\right).
\end{eqnarray*}
At the same time $\Phi^N(u;y) \geq - C\exp(-\kappa M^{\frac 1{d+1}})$ and so 
$\Phi^N$ satisfies Assumption~\ref{assumptions-on-phi} with a function $h^N$
that is proportional to $h$ and with a function $f^N$ that is constant\footnote{Note that the 
function $g$ is fixed here since the data $y$ is assumed fixed}. 
Then an application of Theorem~\ref{thm:posterior-likelihood-perturbations-general}
yields the bound 
\begin{equation*}
	\K(\nu, \nu^N; \cc) \le
    C \| c(\cdot, 0) \|_{L^1(\mu)}\exp\left(-\kappa M^{\frac 1{d+1}}\right),
\end{equation*}
with constant $C=C(\Phi,\kappa,y) > 0$.



\section{Conclusion}
We presented novel theoretical results that control the perturbations of 
Bayesian posterior measures $\nu$ with respect to those of likelihood potentials $\Phi$ and 
the prior measures $\mu$ with respect to IPMs, a broad category of divergences
 that encompasses Wasserstein metrics among others.
The proposed results are 
theoretically convenient as they allow us to adapt the IPMs to the problem at hand 
and as a result can accommodate large classes of nonlinear and locally Lipschitz 
likelihood potentials. 

The use of IPMs is particularly well adapted when prior perturbations are concerned since these 
divergences do not require the perturbed priors to remain absolutely continuous with 
respect to the original prior. This flexibility opens the door to analyzing interesting 
approximation methods such as empirical priors built from a set of samples,
Gaussian priors with very different covariance kernels, and pushforward prior, as 
demonstrated by our applications. These results further pave the path towards 
convergence analysis of modern machine learning techniques for BIPs such as 
data driven priors or DNN approximations.



In future work, it would be interesting to extend the prior and likelihood perturbation results such as presented here to the frequentist framework and study average stability with respect to a 'true' data-generating distribution $y|u^\dagger$. Moreover, the applications discussed in Section~\ref{sec:applications} give rise to interesting further questions such as regression in non-parametric setting or distances between hierarchical Gaussian fields.

\section{Acknowledgements}
AGI is supported by {\it Asociación Mexicana de Cultura, A.C.} and is partially supported by Mexico's {\it Sistema Nacional de Investigadores}. The work of TH was supported by the the Academy of Finland (decision 326961) and the Research Foundation of LUT University. 
FH is supported by start-up funds at the California Institute of Technology and the Deutsche Forschungsgemeinschaft (DFG, German Research Foundation) via project 390685813 - GZ 2047/1 - HCM.
BH is supported by the National Science Foundation grant DMS-208535.  

\bibliographystyle{abbrv}
\bibliography{PosteriorPert_ref}

\appendix

\end{document}